\newtheorem{proposition}{Proposition}
\newcommand{\field}[1]{\mathbb{#1}}
\newcommand{\N}{\field{N}}
\newcommand{\R}{\field{R}}
\newcommand{\vect}[1]{\boldsymbol{#1}} % boldface for vectors
\newcommand{\mat}[1]{\boldsymbol{#1}} % boldface for matrices
\newcommand{\tvect}[1]{\tilde{\boldsymbol{#1}}}
\newcommand{\tmat}[1]{\tilde{\boldsymbol{#1}}}
\newcommand{\tscal}[1]{\tilde{#1}}
\newcommand{\hvect}[1]{\hat{\boldsymbol{#1}}}
\newcommand{\hmat}[1]{\hat{\boldsymbol{#1}}}
\newcommand{\hscal}[1]{\hat{#1}}
\newcommand{\bvect}[1]{\bar{\boldsymbol{#1}}}
\newcommand{\bmat}[1]{\bar{\boldsymbol{#1}}}
\newcommand{\bscal}[1]{\bar{#1}}
\newcommand{\vzero}{\vect{0}}
\newcommand{\vone}{\vect{1}}
\newcommand{\mxzero}{\mat{0}}
\newcommand{\dummystring}{QWERTYU}
\newcommand{\vci}[3][\dummystr]{\ifthenelse{\equal{#1}{\dummystring}}{\vect{#2}_{#3}}{\vect{#2}_{#3}^{(#1)}}}
\newcommand{\mx}[3][\dummystr]{\ifthenelse{\equal{#1}{\dummystring}}{\mat{#2}_{#3}}{\mat{#2}_{#3}^{(#1)}}}
\newcommand{\tvci}[3][\dummystr]{\ifthenelse{\equal{#1}{\dummystring}}{\tvect{#2}_{#3}}{\tvect{#2}_{#3}^{(#1)}}}
\newcommand{\tmx}[3][\dummystr]{\ifthenelse{\equal{#1}{\dummystring}}{\tmat{#2}_{#3}}{\tmat{#2}_{#3}^{(#1)}}}
\newcommand{\tscl}[3][\dummystr]{\ifthenelse{\equal{#1}{\dummystring}}{\tscal{#2}_{#3}}{\tscal{#2}_{#3}^{(#1)}}}
\newcommand{\hvci}[3][\dummystr]{\ifthenelse{\equal{#1}{\dummystring}}{\hvect{#2}_{#3}}{\hvect{#2}_{#3}^{(#1)}}}
\newcommand{\hmx}[3][\dummystr]{\ifthenelse{\equal{#1}{\dummystring}}{\hmat{#2}_{#3}}{\hmat{#2}_{#3}^{(#1)}}}
\newcommand{\hscl}[3][\dummystr]{\ifthenelse{\equal{#1}{\dummystring}}{\hscal{#2}_{#3}}{\hscal{#2}_{#3}^{(#1)}}}
\newcommand{\bvci}[3][\dummystr]{\ifthenelse{\equal{#1}{\dummystring}}{\bvect{#2}_{#3}}{\bvect{#2}_{#3}^{(#1)}}}
\newcommand{\bmx}[3][\dummystr]{\ifthenelse{\equal{#1}{\dummystring}}{\bmat{#2}_{#3}}{\bmat{#2}_{#3}^{(#1)}}}
\newcommand{\bscl}[3][\dummystr]{\ifthenelse{\equal{#1}{\dummystring}}{\bscal{#2}_{#3}}{\bscal{#2}_{#3}^{(#1)}}}
\DeclareMathOperator{\trace}{tr}
\DeclareMathOperator{\diag}{diag}
\DeclareMathOperator*{\argmax}{arg max}
\DeclareMathOperator*{\argmin}{arg min}
\newcommand{\Ex}{\mathrm{E}}
\newcommand{\Var}{\mathrm{Var}}
\newcommand{\Cov}{\mathrm{Cov}}
\newcommand{\Ind}[1]{\mathrm{I}_{\{#1\}}}
\newcommand{\Id}{\mat{I}}
\newcommand{\eps}{\mathrm{\varepsilon}}
\newcommand{\figref}[1]{Figure~\ref{fig:#1}}
\newcommand{\tabref}[1]{Table~\ref{tab:#1}}
\newcommand{\secref}[1]{Section~\ref{sec:#1}}
\renewcommand{\eqref}[1]{Eq.~\ref{eq:#1}}
\newcommand{\eqp}[1]{(\ref{eq:#1})}
\newcommand{\tsz}[2][\dummystring]{\tscl[#1]{z}{#2}}
\newcommand{\tsphi}[2][\dummystring]{\tscl[#1]{\phi}{#2}}
\newcommand{\hsy}[2][\dummystring]{\hscl[#1]{y}{#2}}
\newcommand{\hseps}[2][\dummystring]{\hscl[#1]{\eps}{#2}}
\newcommand{\va}[2][\dummystring]{\vci[#1]{a}{#2}}
\newcommand{\vb}[2][\dummystring]{\vci[#1]{b}{#2}}
\newcommand{\vc}[2][\dummystring]{\vci[#1]{c}{#2}}
\newcommand{\vd}[2][\dummystring]{\vci[#1]{d}{#2}}
\newcommand{\ve}[2][\dummystring]{\vci[#1]{e}{#2}}
\newcommand{\vg}[2][\dummystring]{\vci[#1]{g}{#2}}
\newcommand{\vh}[2][\dummystring]{\vci[#1]{h}{#2}}
\newcommand{\vk}[2][\dummystring]{\vci[#1]{k}{#2}}
\newcommand{\vl}[2][\dummystring]{\vci[#1]{l}{#2}}
\newcommand{\vm}[2][\dummystring]{\vci[#1]{m}{#2}}
\newcommand{\vq}[2][\dummystring]{\vci[#1]{q}{#2}}
\newcommand{\vr}[2][\dummystring]{\vci[#1]{r}{#2}}
\newcommand{\vs}[2][\dummystring]{\vci[#1]{s}{#2}}
\newcommand{\vt}[2][\dummystring]{\vci[#1]{t}{#2}}
\newcommand{\vv}[2][\dummystring]{\vci[#1]{v}{#2}}
\newcommand{\vw}[2][\dummystring]{\vci[#1]{w}{#2}}
\newcommand{\vx}[2][\dummystring]{\vci[#1]{x}{#2}}
\newcommand{\vy}[2][\dummystring]{\vci[#1]{y}{#2}}
\newcommand{\vz}[2][\dummystring]{\vci[#1]{z}{#2}}
\newcommand{\vdelta}[2][\dummystring]{\vci[#1]{\delta}{#2}}
\newcommand{\veps}[2][\dummystring]{\vci[#1]{\eps}{#2}}
\newcommand{\vth}[2][\dummystring]{\vci[#1]{\theta}{#2}}
\newcommand{\vmu}[2][\dummystring]{\vci[#1]{\mu}{#2}}
\newcommand{\vpi}[2][\dummystring]{\vci[#1]{\pi}{#2}}
\newcommand{\vkappa}[2][\dummystring]{\vci[#1]{\kappa}{#2}}
\newcommand{\vxi}[2][\dummystring]{\vci[#1]{\xi}{#2}}
\newcommand{\vsigma}[2][\dummystring]{\vci[#1]{\sigma}{#2}}
\newcommand{\tva}[2][\dummystring]{\tvci[#1]{a}{#2}}
\newcommand{\tvc}[2][\dummystring]{\tvci[#1]{c}{#2}}
\newcommand{\tvg}[2][\dummystring]{\tvci[#1]{g}{#2}}
\newcommand{\tvh}[2][\dummystring]{\tvci[#1]{h}{#2}}
\newcommand{\tvz}[2][\dummystring]{\tvci[#1]{z}{#2}}
\newcommand{\hvl}[2][\dummystring]{\hvci[#1]{l}{#2}}
\newcommand{\hvs}[2][\dummystring]{\hvci[#1]{s}{#2}}
\newcommand{\hvy}[2][\dummystring]{\hvci[#1]{y}{#2}}
\newcommand{\hveps}[2][\dummystring]{\hvci[#1]{\eps}{#2}}
\newcommand{\hvmu}[2][\dummystring]{\hvci[#1]{\mu}{#2}}
\newcommand{\bvth}[2][\dummystring]{\bvci[#1]{\theta}{#2}}
\newcommand{\bvmu}[2][\dummystring]{\bvci[#1]{\mu}{#2}}
\newcommand{\mxa}[2][\dummystring]{\mx[#1]{A}{#2}}
\newcommand{\mxb}[2][\dummystring]{\mx[#1]{B}{#2}}
\newcommand{\mxc}[2][\dummystring]{\mx[#1]{C}{#2}}
\newcommand{\mxd}[2][\dummystring]{\mx[#1]{D}{#2}}
\newcommand{\mxf}[2][\dummystring]{\mx[#1]{F}{#2}}
\newcommand{\mxi}[2][\dummystring]{\mx[#1]{I}{#2}}
\newcommand{\mxm}[2][\dummystring]{\mx[#1]{M}{#2}}
\newcommand{\mxp}[2][\dummystring]{\mx[#1]{P}{#2}}
\newcommand{\mxq}[2][\dummystring]{\mx[#1]{Q}{#2}}
\newcommand{\mxr}[2][\dummystring]{\mx[#1]{R}{#2}}
\newcommand{\mxs}[2][\dummystring]{\mx[#1]{S}{#2}}
\newcommand{\mxt}[2][\dummystring]{\mx[#1]{T}{#2}}
\newcommand{\mxv}[2][\dummystring]{\mx[#1]{V}{#2}}
\newcommand{\mxx}[2][\dummystring]{\mx[#1]{X}{#2}}
\newcommand{\mxsigma}[2][\dummystring]{\mx[#1]{\Sigma}{#2}}
\newcommand{\mxxi}[2][\dummystring]{\mx[#1]{\Xi}{#2}}
\newcommand{\hmxsigma}[2][\dummystring]{\hmx[#1]{\Sigma}{#2}}
\newcommand{\bmxsigma}[2][\dummystring]{\bmx[#1]{\Sigma}{#2}}
\newcommand{\rng}[2][1]{{#1},\dots,{#2}}
\newcommand{\dataset}[1]{\texttt{#1}}
\newcommand{\articlesSubset}{\dataset{EC-sub}}
\newcommand{\articlesAll}{\dataset{EC-all}}
\newcommand{\parts}{\dataset{Parts}}
\newcommand{\model}[1]{\texttt{#1}}
\newcommand{\ets}{\model{ETS}}
\newcommand{\snyder}{\model{NegBin}}
\newcommand{\issm}{\model{LS-pure}}
\newcommand{\issmfeatures}{\model{LS-feats}}
\def\Nc{\mathcal{N}}
\begin{document}

%\begin{frontmatter}

%% Title, authors and addresses

%% use the tnoteref command within \title for footnotes;
%% use the tnotetext command for theassociated footnote;
%% use the fnref command within \author or \address for footnotes;
%% use the fntext command for theassociated footnote;
%% use the corref command within \author for corresponding author footnotes;
%% use the cortext command for theassociated footnote;
%% use the ead command for the email address,
%% and the form \ead[url] for the home page:
%% \title{Title\tnoteref{label1}}
%% \tnotetext[label1]{}
%% \author{Name\corref{cor1}\fnref{label2}}
%% \ead{email address}
%% \ead[url]{home page}
%% \fntext[label2]{}
%% \cortext[cor1]{}
%% \address{Address\fnref{label3}}
%% \fntext[label3]{}

\title{Approximate Bayesian Inference in Linear State Space Models for Intermittent Demand Forecasting at Scale}

%% use optional labels to link authors explicitly to addresses:
%% \author[label1,label2]{}
%% \address[label1]{}
%% \address[label2]{}

%\author{
%  Matthias Seeger, ~~ Syama Rangapuram, ~~ Yuyang Wang, ~~ David Salinas,
%  \\ Jan Gasthaus, ~~ Tim Januschowski, ~~ Valentin Flunkert\\
%  Amazon Development Center Germany \\
%  \texttt{\{matthis, rangapur, yuyawang, dsalina, gasthaus, tjnsch, flunkert\}@amazon.de}
%}
\author{
  Matthias Seeger\\
  \texttt{matthis@amazon.de}\\  
  \And
  Syama Rangapuram\\
  \texttt{rangapur@amazon.de}\\  
  \And
  Yuyang Wang\\
  \texttt{yuyawang@amazon.de}\\  
  \And
  David Salinas\\ 
  \texttt{dsalina@amazon.de}\\  
  \And 
  Jan Gasthaus\\
  \texttt{gasthaus@amazon.de}\\  
  \And
  Tim Januschowski\\
  \texttt{tjnsch@amazon.de}
  \And 
  Valentin Flunkert\\
  \texttt{flunkert@amazon.de}\\
  \AND
  Amazon Development Center, Germany \\
}

%\address{}

\maketitle

\begin{abstract}
  We present a scalable and robust Bayesian inference method for linear state space models.
  The method is applied to demand forecasting in the
  context of a large e-commerce platform, paying special attention to intermittent
  and bursty target statistics.  Inference is approximated by the Newton-Raphson
  algorithm, reduced to linear-time Kalman smoothing, which allows us to operate on
  several orders of magnitude larger problems than previous related work. In a study
  on large real-world sales datasets, our method outperforms competing approaches
  on fast and medium moving items.	
\end{abstract}

%\begin{keyword}
%%% keywords here, in the form: keyword \sep keyword
%
%%% PACS codes here, in the form: \PACS code \sep code
%
%%% MSC codes here, in the form: \MSC code \sep code
%%% or \MSC[2008] code \sep code (2000 is the default)
%
%  Demand forecasting \sep Bayesian methods \sep State space models \sep Seasonality
%
%\end{keyword}

%\end{frontmatter}

%% \linenumbers

%% main text
\section{Introduction}
\label{sec:intro}

Demand forecasting plays a central role in supply chain management, driving automated ordering, in-stock management, and facilities planning. Classical forecasting methods, such as exponential smoothing \citep{Hyndman:08} or ARIMA models \citep{Box:13}, produce Gaussian predictive distributions. While sufficient for inventories of several thousand fast-selling items, Gaussian assumptions are grossly violated for the extremely large catalogues maintained by e-commerce platforms. There, demand is highly {\em intermittent} and {\em bursty}: long runs of zeros, with islands of high counts. Decision making requires quantiles of predictive distributions \citep{Snyder:11}, whose accuracy suffer under erroneous assumptions.

In this work, we detail a novel methodology for intermittent demand forecasting which operates in the industrial environment of a very large e-commerce platform. Implemented in {\tt Apache Spark} \citep{Zaharia:12}, our method is used to process many hundreds of thousands of items and several hundreds of millions of item-days. Key requirements are {\em automated parameter learning} (no expert interventions), {\em scalability} and a high degree of operational {\em robustness}. Our system produces forecasts both for short (one to three weeks) and longer lead times (up to several months), the latter require feature maps depending on holidays, sales days, promotions, and price changes.
Previous work on intermittent demand forecasting in Statistics is surveyed in \cite{Snyder:12}: none of these address longer lead times. On a modelling level, our proposal is related to \cite{Chapados:14}, yet several novelties are essential for operating at the industrial scale we target here.
This paper makes the following contributions:
\begin{itemize}
	\item
	A combination of generalized linear models and exponential smoothing time series models. The
	former enables medium and longer term forecasts, the latter provides temporal
	continuity and capture uncertainties appropriately over time. Compared to
	\cite{Chapados:14}, we provide empirical evidence for the usefulness of this
	combination.
	\item
	A novel algorithm for maximum likelihood parameter learning in state space
	models with non-Gaussian likelihood, using approximate Bayesian inference.
	While there is substantial related prior work, our proposal stands out in
	robustness and scalability. We show how approximate inference is solved by
	the Newton-Raphson algorithm, fully reduced to Kalman smoothing once per
	iteration. This reduction scales {\em linearly} (a vanilla implementation
	would scale cubically). While previously used in Statistics
	\cite[Sect.~10.7]{Durbin:12}, this reduction is not widely known in Machine
	Learning. If L-BFGS~\citep{Nocedal:06} is used instead (as proposed in \cite{Chapados:14}),
	approximate inference fails in our real-world use cases.
	\item
	A multi-stage likelihood, tailored to intermittent and bursty demand data
	(extension of \cite{Snyder:12}), and a novel transfer function for Poisson
	likelihood, which robustifies the Laplace approximation for bursty data. We
	demonstrate that our approach would not work without these novelties.
%	\item
%	Modeling of trend as well as commonly occurring seasonal patterns  that are allowed to slowly drift over time.
%	\item 
%	Modeling of any number of user-defined seasonality patterns that can be combined additively. 
\end{itemize}

A preliminary version of the current work appeared in \cite{Seeger:16}.
In this paper, we present substantial modeling extensions as well as simplifications over \cite{Seeger:16} rendering the current work applicable to a wider class of time series data. In particular, we allow for the modeling of time series patterns (trend, seasonality) which can slowly drift over time. Such patterns occur frequently in real data. For example, overall sales may be larger on Sundays than on Mondays, but this ratio may change across the year. While such seasonality patterns are represented in \cite{Seeger:16} via features in the generalized linear model, they cannot drift, and only the average behavior across the entire time series can be learned.
We also provide detailed derivations here, omitted in \cite{Seeger:16}.

The structure of this paper is as follows. In \secref{glm}, we introduce intermittent demand likelihood function as well as a generalized linear model baseline. Our novel {\em latent state forecasting} methodology is detailed in \secref{latent}. \secref{latent-training} discusses the maximum likelihood parameter learning in our model using approximate Bayesian inference. We relate our approach to prior work in \secref{relwork}. In \secref{exper}, we evaluate our methods both on publicly available data and on a large dataset of real-world demand in the context of e-commerce, comparing against state of the art intermittent forecasting methods.

\section{Generalized Linear Models}\label{sec:glm}

In this section, we introduce a likelihood function for intermittent demand data, along with a generalized linear model as baseline. Denote demand by $z_{i t}\in\N$ ($i$ for item, $t$ for day). Our goal is to predict {\em distributions} of $z_{i t}$ in the future. We do this by fitting a probabilistic model to maximize the likelihood of training demand data, then drawing sample paths
%\footnote{A sample path is one realization of $z_{i t}$ over time.}
from the fitted probabilistic model, which represent forecast distributions. In the sequel, we fix an item $i$ and write $z_t$ instead of $z_{i t}$.

A model is defined by a {\em likelihood} $P(z_t | y_t)$ and a latent function $y_t$. An example is the {\em Poisson}:
\begin{equation}\label{eq:lh-poisson}
P_{\text{poi}}(z | y) = \frac{1}{z!} \lambda(y)^{z}
e^{-\lambda(y)},\quad z\in\N,
\end{equation}
where the rate $\lambda(y)$ depends on $y$ through a transfer function. Demand data over large inventories is both intermittent (many $z_t = 0$) and bursty (occasional large $z_t$), and is not well represented by a Poisson. 

\begin{figure}[ht]
\begin{minipage}{0.48\textwidth}
\includegraphics[width=\textwidth]{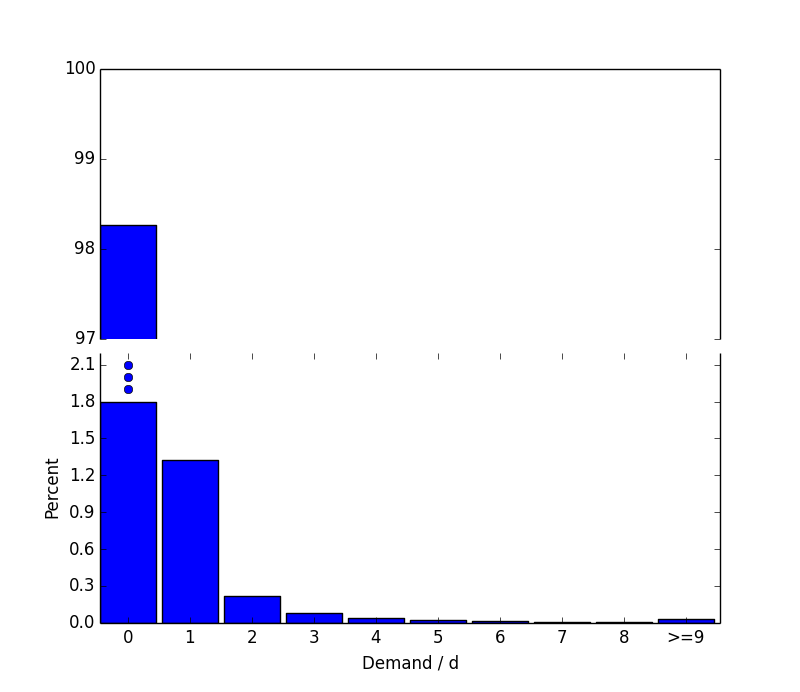}
\end{minipage}
\begin{minipage}{0.50\textwidth}
\includegraphics[width=\textwidth]{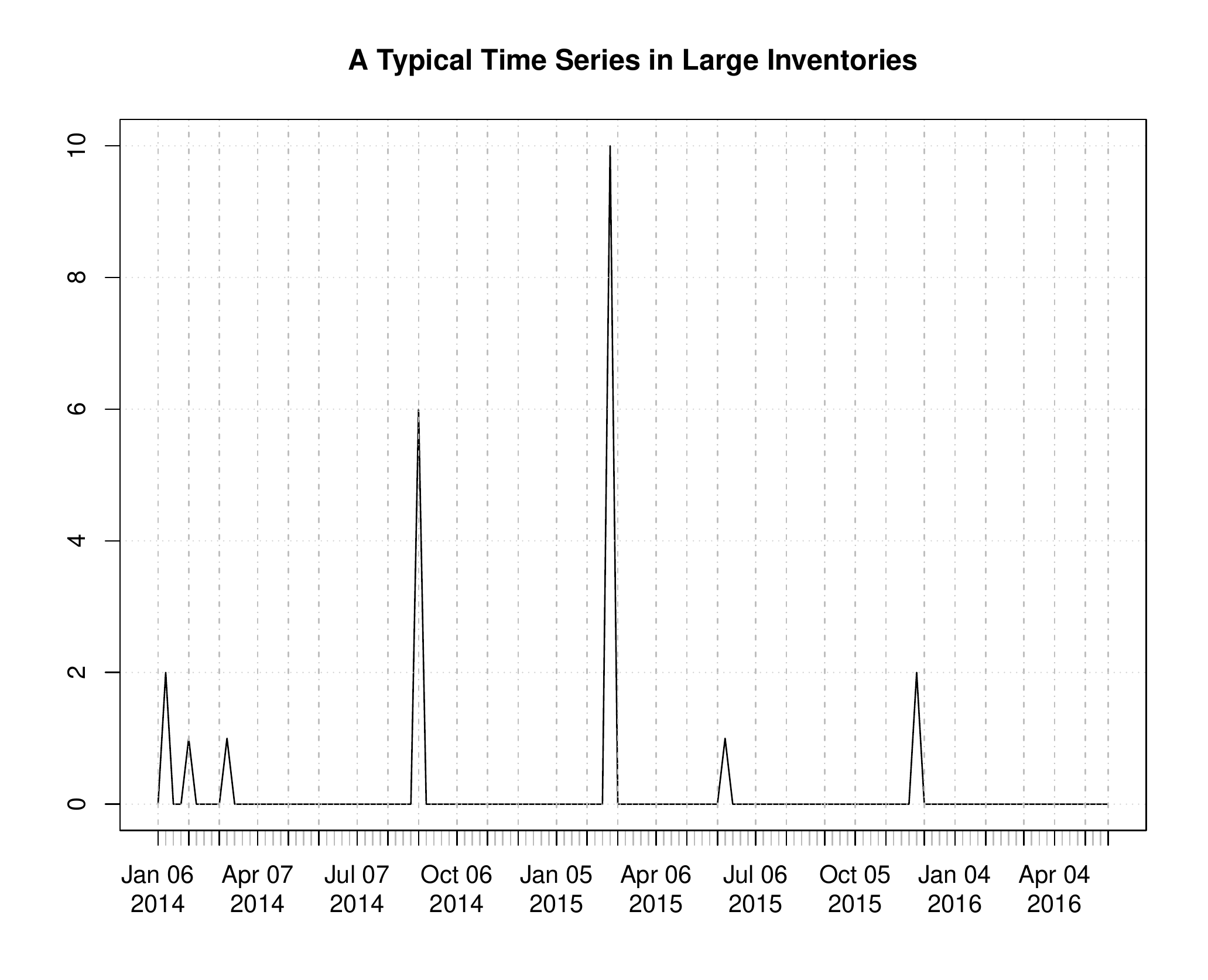}
\end{minipage}
\caption{\label{fig:demand-marginal}
  Left: Marginal histogram of demand values $\{z_{i t}\}$ for a typical dataset. Right: A typical time series in large inventories. }
\end{figure}

A better choice is the {\em multi-stage likelihood}, generalizing a proposal in \cite{Snyder:12}. This likelihood decomposes into $K=3$ stages, each with its latent function $y^{(k)}$. In stage $k=0$, we emit $z = 0$ with probability $\sigma(y^{(0)})$ where $\sigma(u) := (1 + e^{-u})^{-1}$ is the logistic sigmoid. Otherwise, we transfer to stage $k=1$, where $z = 1$ is emitted with probability $\sigma(y^{(1)})$. Finally, if $z\ge 2$, then stage $k=2$ draws $z-2$ from the Poisson \eqp{lh-poisson} with rate $\lambda(y^{(2)})$. In summary, the multi-stage likelihood reads as follows,
\begin{equation}\label{eq:lh-multi}
 P(z | \{y^{(k)}\}) = P_{\text{poi}}(z - 2 | y^{(2)})^{\Ind{z\ge 2}}
 \prod_{k=0}^1 \sigma(\tilde{z}_k y^{(k)})^{\Ind{z\ge k}},\quad
   \tilde{z}_k := \Ind{z=k} - \Ind{z>k}.
\end{equation}

If the latent function $y_t$ (or functions $y_t{}^{(k)}$) is linear, $y_t = \vx{t}^{\top}\vw{}$, we have a {\em generalized linear model} (GLM) \citep{McCullach:83}. Features in $\vx{t}$ include kernels anchored at holidays (Thanksgiving, Christmas, Halloween, etc.), seasonality indicators (DayOfWeek, WeekOfYear, MonthOfYear, etc.), promotion or price change indicators. The weights $\vw{}$ are learned by maximizing the training data likelihood. For the multi-stage likelihood, this amounts to separate instances of binary classification at stages 0, 1, and Poisson regression at stage 2.
%The problems at stages $k>1$ do not run over all data, and in our use cases, the running time for higher stages is thus amortized.
%
Generalized linear forecasters work reasonably well, but have some important drawbacks: 
\begin{itemize}
\item They lack temporal continuity: for short term predictions, even simple smoothers can outperform a tuned GLM;
\item  More important, a GLM predicts overly narrow forecast distributions, whose widths do not grow over time, and it neglects temporal correlations.
\end{itemize}
 Both drawbacks are alleviated in Gaussian linear time series models, such as {\em exponential smoothing} (ES) \citep{Hyndman:08}. A major challenge is to combine this technology with general likelihood functions (Poisson, multi-stage) to enable intermittent demand forecasting.

\section{Latent State Forecasting}\label{sec:latent}

In this section, we develop latent state forecasting for intermittent demand, combining GLMs, general likelihoods, and exponential smoothing time series models. We begin with a single likelihood $P(z_t | y_t)$, for example the Poisson
\eqp{lh-poisson}, then consider a multi-stage extension. The latent process is
\begin{equation}\label{eq:issm}
y_t = \va{t}^{\top}\vl{t-1} + b_t,\quad b_t = \vw{}^{\top}\vx{t},\quad
\vl{t} = \mxf{}\vl{t-1} + \vg{t}\eps_t,\quad \eps_t\sim N(0,1).
\end{equation}
Here, $b_t$ is the GLM deterministic linear function, $\vl{t}$ is a {\em latent state}. This {\em innovation state space model} (ISSM) \citep{Hyndman:08} is defined by $\va{t}$, $\vg{t}$ and $\mxf{}$, as well as the prior $\vl{0}\sim P(\vl{0})$. Note that ISSMs are characterized by a {\em single} Gaussian innovation variable $\eps_t$ per time step. 
Here, the innovation vector $\vg{t}$ comes in terms of parameters to be learned (the innovation strengths), while $\mxf{}$ and $\va{t}$ are fixed\footnote{
	They may still contain {\em hyperparameters}, but these have to be adjusted by
	the user: they are not learned by maximum likelihood.}.
Moreover, the initial state $\vl{0}$ has to be specified. We do so by specifying a Gaussian prior distribution $P(\vl{0})$, whose parameters (means, standard deviation) are learned from data as well.

The proposed model differs from the exponential smoothing (\cite{Hyndman:08}) in that there are two innovations per time step: $y_t \rightarrow z_t$ and $\epsilon_t$. The latter is used for propagating the latent state only. Moreover, an important technical contribution here is combining exponential smoothing with general likelihood functions (Poisson, multi-stage).

The parameters to be learned by maximum likelihood include the weights $\vw{}$ of the linear part $b_t$, parameters in $\vg{t}$, and parameters of the prior $P(\vl{0})$. In the sequel, we collect these parameters in the vector $\vth{}$, whose precise size and encoding depends on the structure of the model.
Note that, for simplicity, we treated the weights $\vw{}$ as a part of the parameters to be estimated in the main text of the paper.
In Appendix~\ref{app:InfOverW}, we discuss an extension of this method where we do inference over the weights $\vw{}$, instead of estimating them as part of $\vth{}$.
We further generalize the method to the missing observations case in Appendix~\ref{app:MissingObs}. 

\subsection{Composing ISS Models}

Real forecasting data often exhibits multiple structural patterns, such as a level, a trend, and various seasonality patterns at different resolutions. Our framework allows one to compose basic ISSMs additively, in order to match such complexities in the data. Basic components are detailed shortly. Their combination works by stacking their latent state vectors on top of each other. Given component ISSMs $(\mxf[j]{}, \vg[j]{}, \va[j]{})$, $j=1,2,\ldots, K$, we construct a composite ISSM by making $\mxf{}$ the block-diagonal matrix with blocks $\mxf[j]{}$, and $\va{t}$, $\vg{t}$ by stacking the component vectors respectively. 

\subsection{Modeling Level}

The simplest possible ISSM maintains a {\em level} component only. Our previous work \cite{Seeger:16} considered only this case. This is referred to here as \texttt{LevelISSM}:
\[
\va{t} = [ 1 ],\quad \mxf{} = [ 1 ],\quad \vg{t} = [ \alpha ],\quad \alpha>0.
\]
The level $l_t \in \R$ evolves over time by adding a random innovation $\alpha \eps_t\sim \Nc(0,\alpha^2)$ to the previous level, so that $\alpha$ specifies the amount of level drift over time. At time $t$, the previous level $l_{t-1}$ is used as prediction $u_t$ and then the level is updated. The initial state prior $P(l_0)$ is given by $l_0 \sim \Nc(\mu_0, \sigma_0^2)$. For \texttt{LevelISSM}, we learn the parameters $\alpha>0$, $\mu_0$, $\sigma_0>0$ apart from the weights $\vw{}$ of the deterministic part.

\subsection{Modeling Level and Linear Trend}

A piecewise linear random process is modeled by using a two-dimensional latent state $\vl{t}\in\R^2$, where one dimension represents the level and the other represents the slope \citep{Hyndman:08}. Such a \texttt{LevelTrendISSM} is given by
\[
\va{t} = \left[\begin{array}{c}
1 \\
1
\end{array}\right], \quad \mxf{} = \left[\begin{array}{cc}
1 & 1 \\
0 & 1
\end{array}\right], \quad \vg{t} = \left[\begin{array}{c}
\alpha \\
\beta
\end{array}\right],
\]
where $\alpha>0$, $\beta>0$.
Both the level and slope components evolve over time by adding innovations $\alpha \eps_t$ and $\beta \eps_t$ respectively, so that $\beta>0$ is the innovation strength for the slope. The level at time $t$ is the sum of level at $t-1$ and slope at $t-1$ (linear prediction). The initial state prior $P(\vl{0})$ is given by $\vl{0}\sim \Nc(\vmu{0},\diag(\vsigma{0}^2))$. We learn the parameters $\alpha>0$, $\beta>0$, $\vmu{0}$, $\vsigma{0} > \vzero$. Finally, we also support damping level and slope factors, by replacing the 1s in $\mxf{}$ and $\va{t}$ by hyperparameters $<1$.

\subsection{Modeling Seasonality}

The ISSM components described so far assume that the latent state at time $t-1$ is a good predictor of what happens at time $t$. Time series often exhibit seasonality patterns, which repeat with a known periodicity. In that setting, the prediction at time $t$ depends on what happened during the last season (or cycle). For example, in ``day-of-week'' seasonality, the prediction at day $t$ depends on the level at $t-7$ (a week ago). 

The same time series often has more than one seasonality pattern. For example, if the targets denote the weekly sales of an item sold, the time series may exhibit ``week/month/quarter-of-year'' seasonality patterns, as well as seasonality patterns centered around moving holidays such as Thanksgiving. Our framework can model any of these seasonality patterns, including user-defined ones, as well as any combination of them.

Each seasonality pattern can be described by a set of seasonal factors (or seasons) associated with it. For example, in the day-of-week pattern there are seven factors, one for each day of the week, and $\vl{t} \in \R^{7}$. Seasonality patterns are represented as \texttt{SeasonalityISSM}. For day-of-week, we have
\[
\va{t} = \mathds{1}_{\{\text{day}(t) = j\}}, \quad
\mxf{} = \Id, \quad
\vg{t} = \gamma \tvg{j}.
\]
Here $\va{t}$ is an indicator vector specifying \textit{when a factor is used}.
Our definition of $\va{t}$ implies that for any given $t$, exactly one factor is used. The definition of $\vg{t}$ and $\mxf{}$ determines how the factors evolve over time. Since factors occupy fixed entries in $\vl{t}$, we can use $\mxf{} = \Id$. Then, a certain number of them are perturbed by a multiple of the innovation variable $\eps_t$, where $\gamma>0$ is the innovation strength to be learned. The definition of $\tvg{t}$ depends on how many times a factor is used per cycle.
If the granularity of the data is daily, then each factor of the day-of-week seasonality is used exactly once, and we simply have $\tvg{t} = \va{t}$. In this case, a factor $j$ is used ($a_{t j} = 1$), and then updated immediately afterwards ($g_{t j} = \gamma > 0$).

However, if the data granularity is hourly, each factor in day-of-week is used 24 times per cycle, during all hours of its day. In this case, we define $\tilde{g}_{t j} = (1/N_j) a_{t j}$, where $N_j$ is the number of times factor $j$ is used during a cycle. Namely, each factor is associated a ``unit budget'' per cycle for its update, which is spread uniformly over all usage times. While in the day-of-week example, all $N_j = 24$, these weights can be different if grouping is used (see below), or also when leap hours are encountered (switching between summer and winter times). Finally, the initial state prior $P(\vl{0})$ is given by $\vl{0} \sim \Nc(\vmu{0}, \sigma_0^2\Id)$, where $\vmu{0}$ is the vector of means, and $\sigma_0>0$ is the common standard deviation. Learnable parameters for \texttt{SeasonalityISSM} are $\gamma>0$, $\vmu{0}$, and $\sigma_0>0$.

\paragraph{Grouping of seasonality factors}
When using ``hour-of-day'' or ``day-of-month'' seasonality, we can end up with a large number of factors. In such situations, it can be beneficial to {\em tie} some of these factors to be the same. In hour-of-day, all the night hours might have same pattern, while each morning hour could show different behavior. Or in day-of-week, we may only want to distinguish between workdays and weekend days.
Here we allow the user to simply group seasonality factors, by specifying a mapping $\pi(j)$ from atomic factors to group factors. The latent state $\vl{t}$ contains one factor per group (indexed by $h$ rather than $j$), and we have that $a_{t h} = \mathds{1}_{\pi(\text{time}(t)) = h}$. Moreover, we employ ``budget weighting'' by defining $\tilde{g}_{t h} = (1 / N_h) a_{t h}$, where $N_h$ is the number of times the factor for group $h$ is used during the cycle. As an example, consider day-of-week, where Monday is the first atomic factor. The workday versus weekend day grouping is given by $\vpi{} = [0, 0, 0, 0, 0, 1, 2]$: all workdays are lumped together, while Saturday and Sunday remain on their own. Also, $N_0 = 5$, $N_1 = N_2 = 1$ in this case. While grouping requires model selection by the user, it can have a number of benefits. First, fewer seasonality factors reduce the risk of overfitting. Second, since inference in our model scales cubically with the dimensionality of $\vl{t}$, grouping can speed up computations significantly.

\paragraph{Modeling custom/user-defined seasonality}
It is straightforward to represent any user-defined seasonality pattern in our framework.
To implement this, one just needs to provide the definitions of the ``sampling'' vectors $\va{t}$ as well as the ``update'' vectors $\tvg{t}$. It is important to note that subsequent cycles can even differ in their lengths. An example based on custom seasonality pattern is discussed in \secref{custom-seasonality}.

\subsection{Handling Missing Observations}
In real-world retail forecasting, it is common for an item to go out of stock and its sales $z_t$ only reveals a fraction of the true demand. The probabilistic nature of latent state forecasting makes it easy to use the out of stock information. Whenever an item is out of stock, the data $z_t = 0$ is explained away and we drop the corresponding likelihood. Note that an item may be partially out of stock during a day, still creating some sales, i.e., $\rho_t\in (0, 1)$ percentage of the time an item is available, one simply needs to rescale the likelihood by taking the power to $\rho_t.$ We could treat $z_t$ as unobserved, but lower-bounded by the sales, and an expectation maximization extension may be applied, which is beyond the scope of the current paper. Details can be found in Section~\ref{sec:exper-oos} and Appendix~\ref{app:MissingObs}.

%One could treat the unobserved or partially observed demand as missing and use external information to impute the value. 

%This idea was first used in demand forecasting for Pantry ASINs where the in-stock proportion is calculated as the number of fulfillment centers (FC) that is in-stock divided by total number of FCs. 

\section{Training. Prediction. Multiple Stages}
\label{sec:latent-training}

\begin{algorithm}
	\caption{Finding mode of the complete likelihood:
		$\argmin_{\veps, \vl{0}} F(\veps{},\vl{0})$, where $F(\veps{},\vl{0}) = -\log P(\vz{}|\vy{}) P(\veps{}) P(\vl{0}),\ \vy{} =
		\vy{}(\veps{},\vl{0})$ and $\veps = [\eps_1,\dots, \eps_{T-1}]$.}
	%		  	$\argmax_{\veps, \vl{0}} \log P(\vz{}, \veps{}, \vl{0})$}	
	\begin{algorithmic}[1]  	
	\label{alg:Mode}			
		\WHILE{NOT CONVERGED} 
		\STATE Compute $\vy[i]{} = \vy{}(\veps[i]{}, \vl[i]{0})$ via the forward pass; 
		\STATE Compute the second order Taylor approximation: $(y_t^{(i)}, z_t)\mapsto (\tsz{t},
		\sigma_t^2)$, leading to the quadratic surrogate
		\[
		\tilde{F}(\veps{},\vl{0}) = -\log N(\tvz{} | \vy{}, \diag[\sigma_t^2])
		P(\veps{}) P(\vl{0}),\quad \vy{} = \vy{}(\veps{},\vl{0}).
		\]
		\STATE Find minimum point of $\tilde{F}$:
		\[
		[\veps[i+1]{}, \vl[i+1]{0}] = \argmin \tilde{F}(\veps{},\vl{0}).
		\]
		\ENDWHILE
	\end{algorithmic}
\end{algorithm}

We would like to learn $\vth{}$ by maximizing the likelihood of data $[z_t]_{t=\rng{T}}$. Compared to the GLM case, this is harder to do, since latent (unobserved) variables $\vs{} = [\eps_1,\dots, \eps_{T-1}, \vl{0}{}^{\top}]^{\top}$ have to be integrated out. If our likelihood $P(z_t | y_t)$ was Gaussian, this marginalization could be computed analytically via Kalman smoothing \citep{Hyndman:08}. 

\begin{figure}[!htb]
\centering
\begin{minipage}{.32\textwidth}
\includegraphics[width=\textwidth]{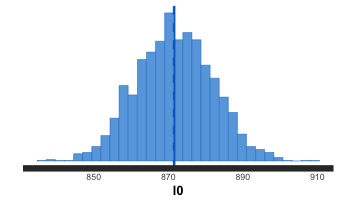}
\end{minipage}
\begin{minipage}{.32\textwidth}
\includegraphics[width=\textwidth]{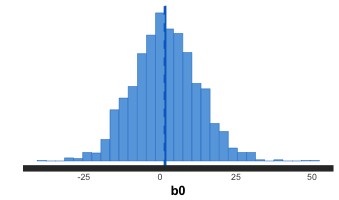}
\end{minipage}
\begin{minipage}{.32\textwidth}
\includegraphics[width=\textwidth]{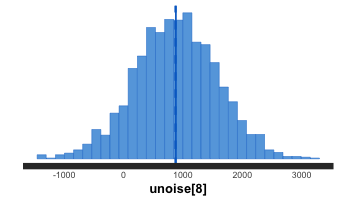}
\end{minipage}
\caption{The true posterior distribution of a subset of latent variables for Poisson likelihood using Markov Chain Monte Carlo (MCMC). The ``bell'' shaped posteriors for the latent variables illustrates our motivation of using Laplace approximation.}
\label{fig:posterior}
\end{figure}

With a non-Gaussian likelihood, the problem is analytically intractable, yet is amenable to the {\em Laplace approximation} \citep[Sect.~4.4]{Bishop:06}, which approximates the posterior distribution of the latent variables with Gaussian distribution through the second-order Taylor expansion. Figure~\ref{fig:posterior} shows that, for the Poisson likelihood, the motivation of using Gaussian distribution to approximate the posterior of the latent variables $l_0, b_0$ and one of the innovations $\epsilon_t.$

More precisely, the exact log likelihood is 
\[
\log P(\vz{}|\vth{}) = \log\int P(\vz{},\vs{}|\vth{})\, d\vs{} = \log\int \prod_t P(z_t | y_t) P(\vs{})\, d\vs{},
\] where $\vy{}=\vy{}(\vs{})$ is the affine mapping given by \eqp{issm}. We proceed in two steps. 

\begin{itemize}
\item First, we find the {\em mode} of the posterior: $\hvs{} = \argmax \log P(\vz{},\vs{}|\vth{})$, the {\em inner optimization} problem.
\item Second, we replace $-\log P(\vz{},\vs{} | \vth{})$ by its quadratic Taylor approximation $f(\vs{}; \vth{})$ at the mode.
\end{itemize}
The criterion to replace the negative log likelihood is $\psi(\vth{}) := -\log\int e^{-f(\vs{}; \vth{})}\, d\vs{}$. More precisely, denote $\phi_t(y_t) := -\log P(z_t | y_t)$, and let $\hvy{} = \vy{}(\hvs{})$, where $\hvs{}$ is the posterior mode. The log-concavity of the likelihood implies that $\phi_t(y_t)$ is convex, and $\phi_t''(y_t) > 0$. The quadratic Taylor approximation to $\phi_t(y_t)$ at $\hsy{t}$ is $\tilde{\phi}_t(y_t) := -\log N(\tsz{t} | y_t,\sigma_t^2)$, where $\sigma_t^2 = 1/\phi_t''(\hsy{t})$ and $\tsz{t} = \hsy{t} - \sigma_t^2\phi_t'(\hsy{t})$. Now, Laplace's approximation to $-\log P(\vz{}|\vth{})$ can be written as
\begin{equation}\label{eq:outer-crit}
\psi(\vth{}) = -\log\int\prod_t N(\tsz{t} | y_t,\sigma_t^2) P(\vs{})\, d\vs{}
+ \sum_t\left( \phi_t(\hsy{t}) - \tilde{\phi}_t(\hsy{t}) \right),\quad
\vy{} = \vy{}(\vs{}; \vth{}).
\end{equation}
For log-concave\footnote{
	Unless otherwise said, all likelihoods in this paper are log-concave.}
$P(z_t | y_t)$, the inner optimization is a convex problem. We use the {\em Newton-Raphson} algorithm to solve it (see Algorithm \ref{alg:Mode}). This algorithm iterates between fitting the current criterion by its local second order approximation and minimizing the quadratic surrogate. For the former step, we compute $y_t$ values by a forward pass \eqp{issm}, then replace the potentials $P(z_t | y_t)$ by $N(\tilde{z}_t | y_t,\sigma_t^2)$, where the values $\tilde{z}_t$, $\sigma_t^2$ are determined by the second order fit (as above, but $\hsy{t}\to y_t$). The latter step amounts to computing the posterior mean  $\Ex[\vs{}]$ (equal to the mode) of the resulting linear-Gaussian model. This inference problem is solved by a variant of Kalman smoothing.
The technical details for this posterior inference problem are given in Appendix~\ref{app:Inference}. In practice, for numerical stability, we solve this inference problem by the information square root filter (see Appendix~\ref{app:ISF}).\footnote{
	We use a numerically robust implementation of Kalman smoothing, detailed in \citep[Sect.~12]{Hyndman:08}.}
%Full details are given in a supplemental report.

Not only finding the mode $\hvs{}$, but also the computation of $\nabla_{\vth{}}\psi$, is fully reduced to Kalman smoothing. This point is crucial. We can find $\hvs{}$ by the most effective optimization algorithm there is. In general, each Newton step requires the $O(T^3)$ inversion of a Hessian matrix. We reduce it to Kalman smoothing, a robust algorithm with $O(T)$ scaling. As shown in \secref{relwork}, Newton-Raphson is essential here: other commonly used optimizers fail to find $\hvs{}$ in a reasonable time.

Prediction samples are obtained as follows. Denote observed demand by $[z_{1}, z_{2},\dots, z_{T}]$, unobserved demand in the prediction range by $[z_{T+1}, z_{T+2}, \dots]$. We run Newton-Raphson one more time to obtain the Gaussian approximation to the posterior $P(\vl{T} | \vz{1:T})$ over the final state. For each sample path $[z_{T+t}]$, we draw $\vl{T}\sim P(\vl{T} | \vz{1:T})$, $\eps_{T+t}\sim N(0,1)$, compute $[y_{T+t}]$ by a forward pass, and $z_{T+t}\sim P(z_{T+t} | y_{T+t})$. Drawing prediction samples is not more expensive than from a GLM.

Finally, we generalize latent state forecasting to the multi-stage likelihood. As for the GLM, we learn parameters $\vth[k]{}$ separately for each stage $k$. Stages $k=0, 1$ are binary classification, while stage $k=2$ is count regression. Say that a day $t$ is {\em active} at stage $k$ if $z_t\ge k$. Recall that with GLMs, we simply drop non-active days. Here, we use ISSMs for $[y_t^{(k)}]$ on the full range $t=\rng{T}$, but all non-active $y_t^{(k)}$ are considered {\em unobserved}: no likelihood potential is associated with $t$. Both Kalman smoothing and mode finding (Laplace approximation) are adapted to missing observations, which presents no difficulties (see also \secref{exper-oos}).

\subsection{Some Details}\label{sec:latent-details}

In this section, we sketch technical details, most of which are novel contributions. As demonstrated in our experiments, these details are essential for the whole approach to work robustly at the intended scale on our difficult real-world data. Full details are given in a supplemental report.

We use L-BFGS \citep{Nocedal:06} for the {\em outer optimization} of $\psi(\vth{})$, encoding the constrained parameters: $\alpha = \alpha_m + (\alpha_M - \alpha_m)\sigma(\theta_1)$; $0 < \alpha_m < \alpha_M$; $\sigma_0 = \log(1 + e^{\theta_2}) > 0$. We add a quadratic regularizer $\sum_j(\rho_j/2) (\theta_j - \bar{\theta}_j)^2$ to the criterion, where $\rho_j$, $\bar{\theta}_j$ are shared across all items. Finally, recall that with the multi-stage likelihood, day $t$ is unobserved at stage $k>1$ if $z_t<k$. If for some item, there are less than 7 observed days in a stage, we skip training and {\em fall back} to fixed parameters $\bvth{}$.

Every single evaluation of $\psi(\vth{})$ requires finding the posterior mode $\hvs{}$. This high-dimensional inner optimization has to converge robustly in few iterations: $\hvs{} = \argmin F(\vs{}; \vth{}) := -\log P(\vz{} | \vs{}) - \log P(\vs{}) = \sum_t\phi_t(y_t) - \log P(\vs{})$. The use of Newton-Raphson and its reduction to linear-time Kalman smoothing was noted above. The algorithm is extended by a line search procedure as well as a heuristic to pick a starting point $\vs{0}$ (Appendix~\ref{app:LineSearch}).

We have to compute the gradient $\nabla_{\vth{}}\psi(\vth{})$, where the criterion is given by \eqp{outer-crit}. The main difficulty here are indirect dependencies: $\psi(\vth{}, \hvy{}, \hvs{})$, where $\hvy{} = \vy{}(\hvs{};\vth{})$, $\hvs{} = \hvs{}(\vth{})$. Since $\hvs{}$ is computed by an iterative algorithm, commonly used automated differentiation tools do not sensibly apply here. Maybe the most difficult indirect term is $(\partial_{\hvs{}}\psi)^{\top} (\partial\hvs{}/\partial\theta_j)$, where $\theta_j\in\vth{}$. First, $\hvs{}$ is defined by $\partial_{\hvs{}} F = \vzero$. Taking the derivative w.r.t.\ $\theta_j$ on both sides, we obtain $(\partial\hvs{}/\partial\theta_j) = - (\partial_{\hvs{},\hvs{}} F)^{-1} \partial_{\hvs{},\theta_j} F$, so we are looking at $-(\partial_{\hvs{},\theta_j} F)^{\top} (\partial_{\hvs{},\hvs{}} F)^{-1} (\partial_{\hvs{}}\psi)$. It is of course out of the question to compute and invert $\partial_{\hvs{},\hvs{}} F$. But $(\partial_{\hvs{},\hvs{}} F)^{-1} (\partial_{\hvs{}}\psi)$ corresponds to the {\em posterior mean} for an ISSM with Gaussian likelihood, which depends on $\partial_{\hvs{}}\psi$. This means that the indirect gradient part costs {\em one more} run of Kalman smoothing, independent of the number of parameters $\theta_j$. Note that the same reasoning underlies our reduction of Newton-Raphson to Kalman smoothing.
We refer the reader to Appendix~\ref{app:Gradient} for details on the gradient computation.

A final novel contribution is essential for making the Laplace approximation work on real-world bursty demand data. Recall the transfer function $\lambda(y)$ for the Poisson likelihood \eqp{lh-poisson} at the highest stage $k=2$. As shown in \secref{relwork}, the exponential choice $\lambda = e^y$ fails for all but short term forecasts. With a GLM, the logistic transfer $\lambda(y) = g(y)$ works well, where $g(u) := \log( 1 + e^u )$. It behaves like $e^y$ for $y<0$, but grows linearly for positive $y$. However, it exhibits grave problems for latent state forecasting. Denote $\phi(y) := -\log P(z | y)$, where $P(z | y)$ is the Poisson with logistic transfer. Recall Laplace's approximation from \secref{latent-training}: $\phi(\cdot)$ is fit by a quadratic $\tilde{\phi}(\cdot) = (\cdot - \tsz{})/(2\sigma^2)$, where $\sigma^2 = 1/\phi''(y)$, $\tsz{} = y - \sigma^2\phi'(y)$. For large $y$ {\em and} $z = 0$, these two terms scale as $e^y$, while for $z>0$ they grow polynomially. In real-world data, we regularly exhibit sizable counts (say, a few $z_t > 25$, driving up $y_t$), followed by a single $z_t = 0$. At this point, huge values $(\tilde{z}_t, \sigma_t^2)$ arise, causing cancellation errors in $\psi(\vth{})$, and the outer optimization terminates prematurely.

The root cause for these issues lies with the transfer function: $g(y)\approx y$ for large $y$, its curvature behaves as $e^{-y}$. Our remedy is to propose the novel {\em twice logistic transfer function}: $\lambda(y) = g( y ( 1 + \kappa g(y) )$, where $\kappa>0$. If $\phi^{\kappa}(y) = -\log P(z | y)$ with the new transfer function, then $\phi^{\kappa}(y)$ behaves similar to $\phi^0(y)$ for small or negative $y$, but crucially $(\phi^{\kappa})''(y)\approx 2\kappa$ for large $y$ {\em and any} $z\in\N$. This means that Laplace approximation terms are $O(1/\kappa)$. Setting $\kappa = 0.01$ resolves all problems described above. Importantly, the resulting Poisson likelihood is log-concave for any $\kappa\ge 0$. We conjecture that similar problems may arise with other ``local'' variational or expectation propagation inference approximations as well. The twice logistic transfer function should therefore be of wider applicability.

\section{Related Work}\label{sec:relwork}

Our work has precursors both in Statistics and Machine Learning. Maximum likelihood learning for exponential smoothing  models \citep{Brown:59} is developed in \cite{Hyndman:08}. These methods are limited to Gaussian likelihood, approximate Bayesian inference is not used. Starting from Croston's method \citep[Sect.~16.2]{Hyndman:08}, there is a sizable literature on intermittent demand forecasting, as reviewed in \cite{Snyder:12}. The best-performing method in \cite{Snyder:12} uses negative binomial likelihood and a damped dynamic, parameters are learned by maximum likelihood. There is no latent (random) state, and neither non-Gaussian inference nor Kalman smoothing are required. It does not allow for a combination with GLMs.

We employ approximate Bayesian inference in a linear dynamical system, for which there is a lot of prior work in Machine Learning \citep{Beal:03,Barber:06,Barber:11}. While Laplace's technique is the most frequently used deterministic approximation in Statistics, both in publications and in automated inference systems \citep{Rue:09}, other techniques such as expectation propagation are applicable to models of interest here \citep{Minka:01a,Heskes:02}. The robustness and predictable running time of Laplace's approximation are key in our application, where inference is driving parameter learning, running in parallel over hundreds of thousands of items. Expectation propagation is not guaranteed to converge, and Markov chain Monte Carlo methods even lack automated convergence tests.

The work most closely related to ours is \citep{Chapados:14}. They target intermittent demand forecasting, using a Laplace approximation for maximum likelihood learning, allow for a combination with GLMs, and go beyond our work transferring information between items by way of a hierarchical prior distribution. Their work is evaluated on small datasets and short term scenarios only. In contrast, our system runs robustly on many hundreds of thousands of items and many millions of item-days, a three orders of magnitude larger scale than what they report. They do not explore the value of a feature-based deterministic part, which on our real-world data is essential for medium term forecasts. We find that a number of choices in \citep{Chapados:14} are limiting when it comes to robustness and scalability. First, they choose a likelihood which is not log-concave for two reasons: they use a negative binomial distribution instead of a Poisson, and they use zero-inflation instead of a multi-stage setup.\footnote{
	Zero-inflation, $p_0\Ind{z_t=0} + (1-p_0) P'(z_t | y_t)$, destroys log-concavity for $z_t = 0$.}
This means their inner optimization problem is non-convex, jeopardizing robustness and efficiency of the nested learning process. Moreover, in our multi-stage setup, the {\em conditional probability} of $z_t=0$ versus $z_t>0$ is represented exactly, while zero-inflation caters for a time-independent zero probability only.
%\footnote{
%  The drawback of multi-stage is that we need $>1$ latent functions.}

Next, they use an exponential transfer function $\lambda = e^y$ for the negative binomial rate, while we propose the novel twice logistic function (\secref{latent-details}). Experiments with the exponential choice on our data resulted in total failure, at least beyond short term forecasts. Its huge curvature for large $y$ results in extremely large and instable predictions around holidays. In fact, the exponential function causes rapid growth of predictions even without a linear function extension, unless the random process is strongly damped.
Finally, they use a standard L-BFGS solver for their inner problem, evaluating the criterion using additional sparse matrix software. In contrast, we enable Newton-Raphson by reducing it to Kalman smoothing. 
\begin{wrapfigure}{l}{0.5\textwidth}
	\begin{center}
		\includegraphics[width=0.48\textwidth]{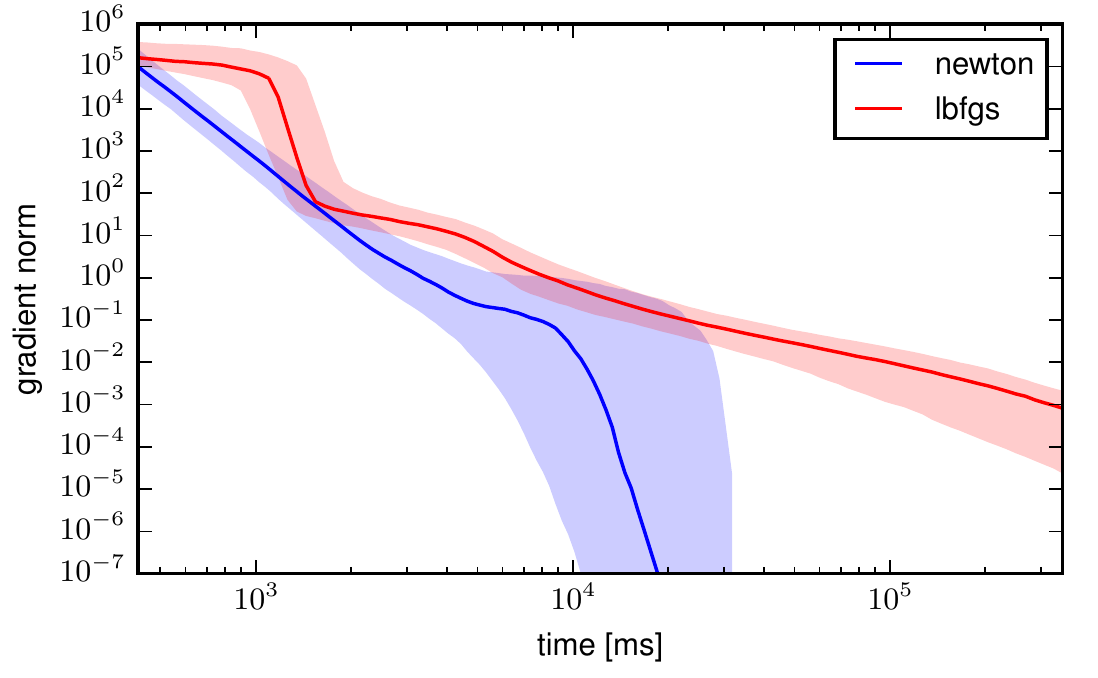}
	\end{center}
	\caption{\label{fig:optimizer} Comparison Newton-Raphson vs.\ L-BFGS for inner optimization. Sampled at first evaluation of $\psi(\vth{})$. Shown are median (p10, p90) over ca.~1500 items. L-BFGS fails to converge to decent accuracy.}
	\vspace{-15mm}
\end{wrapfigure}

In \figref{optimizer}, we evaluate the usefulness of L-BFGS for mode finding in our setup.\footnote{
	The inner problem is convex, its criterion is efficiently implemented (no dependence on foreign code). The situation in \cite{Chapados:14} is likely more difficult.}
L-BFGS clearly fails to attain decent accuracy in any reasonable amount of time, while Newton-Raphson converges reliably. Such inner reliability is key to reaching our goal of fully automated learning in an industrial system. In conclusion, while the lack of public code for \citep{Chapados:14} precludes a direct comparison, their approach, while partly more advanced, should be limited to smaller problems, shorter forecast horizons, and would be hard to run in an industrial setting.

\section{Experiments}\label{sec:exper}
In this section, we present experimental results, comparing variants of our approach to related work.

% Could be shortened!
\subsection{Out of Stock Treatment}\label{sec:exper-oos}

With a large and growing inventory, a fraction of items is {\em out of stock} at any given time, meaning that order fulfillments are delayed or do not happen at all. When out of stock, an item cannot be sold ($z_t = 0$), yet may still elicit considerable customer demand. The probabilistic nature of latent state forecasting renders it easy to use out of stock information. If an item is not in stock at day $t$, the data $z_t=0$ is explained away, and the corresponding likelihood term should be dropped. As noted in \secref{latent-training}, this presents no difficulty in our framework.

\begin{figure}[ht!]
  \centering
  \begin{tabular}{cc}
    \includegraphics[width=0.48\textwidth]{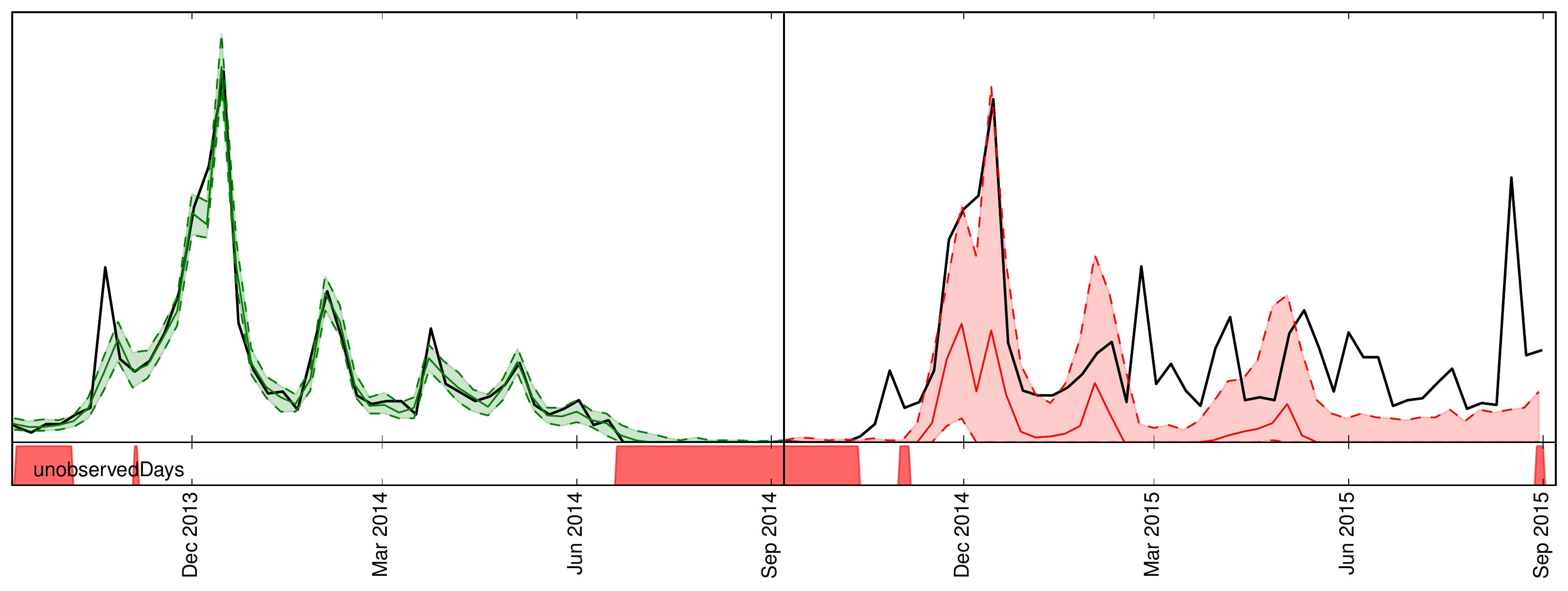} &
    \includegraphics[width=0.48\textwidth]{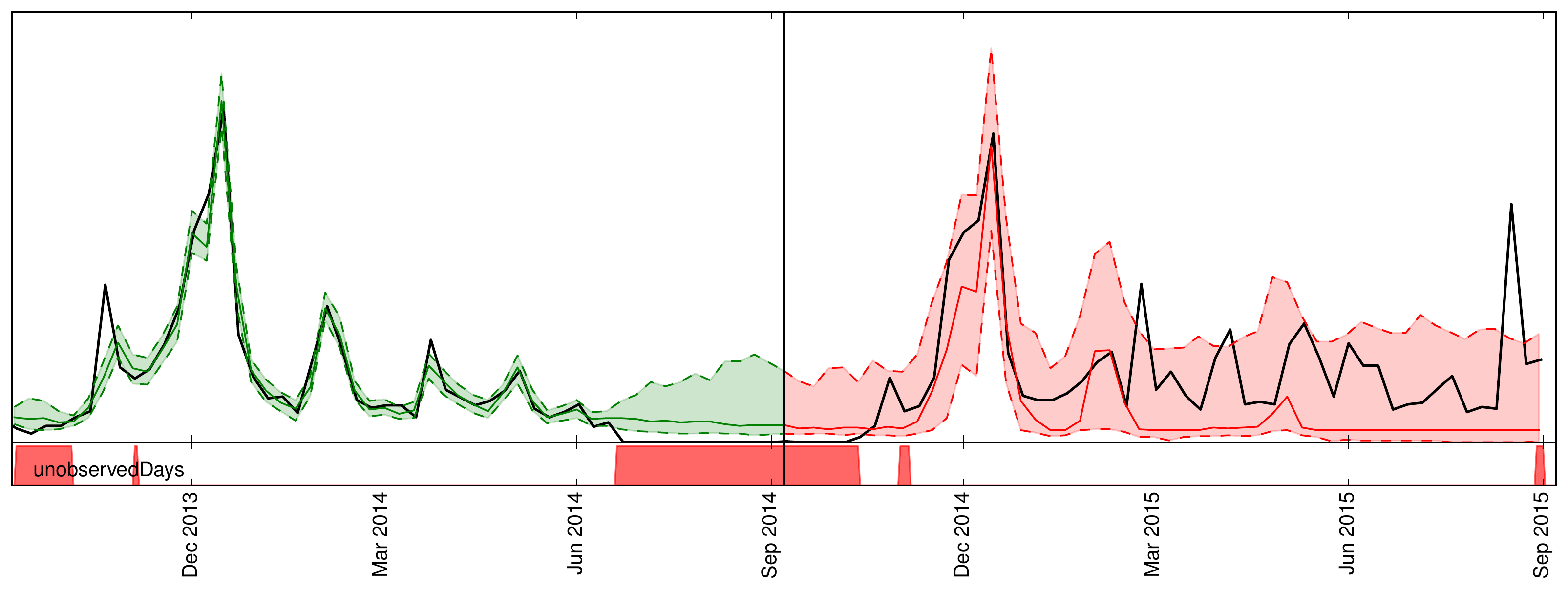}
  \end{tabular}
  \caption{\label{fig:outofstock}
    Demand forecast for an item which is partially out of stock. Each panel: Training range left (green), prediction range right (red), true targets black. In color: Median, P10 to P90. Bottom: Out of stock ($\ge 80\%$ of day) marked in red. {\bf Left}: Out of stock signal ignored. Demand forecast drops to zero, strong underbias in prediction range. {\bf Right}: Out of stock regions treated as missing observations. Demand becomes uncertain in out of stock region. No underbias in prediction range.}
\end{figure}

In \figref{outofstock}, we show demand forecasts for an item which is out of stock during certain periods in the training range. It is obvious that ignoring the out of stock signal leads to systematic underbias (since $z_t=0$ is interpreted as ``no demand''). This underbias is corrected for by treating out of stock regions as having unobserved targets.

%sFor partially out of stock items, we rescale the 

% Note that an item may be partially out of stock during a day, still creating some sales. In such cases, we could treat $z_t$ as unobserved, but lower-bounded by the sales, and an expectation maximization extension may be applied. However, such situations are comparatively rare in our data (compared to full-day out of stock).
%
In the rest of this section, latent state forecasting is taking out of stock information into account.

\subsection{Comparative Study}

We present experimental results obtained on a number of datasets, containing intermittent counts time series. \parts{} contains monthly demand of spare parts at a US automobile company, is publicly available, and was previously used in \citep{Hyndman:08,Snyder:12,Chapados:14}. Further results are obtained on internal daily e-commerce sales data. In either case, we subsampled the sets in a stratified manner from a larger volume used in our production setting. \articlesSubset{} is medium size and contains fast and medium moving items. \articlesAll{} is a large dataset (more than 500K items, 150M item-days), being the union of \articlesSubset{} with items which are slower moving. Properties of these datasets are given in \figref{stats-and-curves}, top left.
%\footnote{
%  For simplicity, {\em days} is used as unit of $t$ in the main text for all datasets, even though this unit is ``months'' for \parts{}.}
Demand is highly intermittent and bursty in all cases, as witnessed by a large $CV^2$ and a high proportion of $z_t=0$: these properties are typical for supply chain data. Not only is \articlesAll{} much larger than any public demand forecasting dataset we are aware of, our internal datasets consists of longer series (up to $10\times$) and are more bursty than \parts{}.

The following methods are compared. \ets{} is exponential smoothing with Gaussian additive errors and automatic model selection, a frequently used R package \citep{Hyndman:08a}. \snyder{} is our implementation of the negative binomial damped dynamic variant of \cite{Snyder:12}. We consider two variants of our latent state forecaster: \issm{} without features, and \issmfeatures{} with a feature vector $\vx{t}$ (basic seasonality, kernels at holidays, price changes, out of stock). Predictive distributions are represented by 100 samples over the prediction range (length 8 for \parts{}, length 365 for others). We employ quadratic regularization for all methods except \ets{} (see \secref{latent-details}). Hyperparameters consist of regularization constants $\rho_j$ and centers $\bar{\theta}_j$ (full details are given in the supplemental report). We tune\footnote{
  We found that careful hyperparameter tuning is important for obtaining good results, also for \snyder{}. In contrast, regularization is not even mentioned in \cite{Snyder:12} (our implementation of \snyder{} includes the same quadratic regularization as for our methods).}
such parameters on random $10\%$ of the data, evaluating test results on the remaining $90\%$. For \issm{} and \issmfeatures{}, we use two sets of tuned hyperparameters on the largest set \articlesAll{}: one for the \articlesSubset{} part, the other for the rest.

Our metrics quantify the forecast accuracy of certain quantiles of predictive distributions. They are defined in terms of {\em spans} $[L, L+S)$ in the prediction range, where $L$ are {\em lead times}. In general, we ignore days when items are out of stock (see \figref{stats-and-curves}, top left, for in-stock ratios). If $\pi_{i t} = \Ind{i\, \text{in stock at}\, t}$, define $Z_{i; (L,S)} = \sum_{t=L}^{L+S-1}\pi_{i t} z_{i t}$. For $\rho\in(0,1)$, the predicted $\rho$-quantile of $Z_{i; (L,S)}$ is denoted by $\hat{Z}^{\rho}_{i; (L,S)}$. These predictions are obtained from the sample paths by first summing over the span, then estimating the quantile by way of sorting. The {\em $\rho$-quantile loss}\footnote{
  $\Ex_Z[L_{\rho}(Z,\hat{z})]$ is minimized by the $\rho$-quantile. Also, $L_{0.5}(z,\hat{z}) = |z-\hat{z}|$.}
is defined as $L_{\rho}(z,\hat{z}) = 2(z-\hat{z})( \rho\Ind{z>\hat{z}} - (1-\rho)\Ind{z\le \hat{z}} )$. The {\em $\text{P}(\rho\cdot 100)$ risk metric} for $[L,L+S)$ is defined as $R^{\rho}[{\cal I}; (L, S)] = |{\cal I}|^{-1}\sum_{i\in{\cal I}} L_{\rho}(Z_{i; (L,S)}, \hat{Z}^{\rho}_{i; (L,S)})$, where the left argument $Z_{i; (L,S)}$ is computed from test targets.\footnote{
  More precisely, we filter ${\cal I}$ before use in $R^{\rho}[{\cal I}; (L, S)]$: ${\cal I}' = \{ i\in{\cal I}\, |\, \sum_{t=L}^{L+S-1}\pi_{i t}\ge 0.8 S \}$.}
We focus on P50 risk ($\rho=0.5$; mean absolute error) and P90 risk ($\rho=0.9$; the $0.9$-quantile is often relevant for automated ordering).
% In the tables and figures below, we use some short notation for $(L, S)$ (meaning the range $[L, L+S)$). d$L$/d$S$ stands for $(L, S)$, w$X$/w$Y$ for $(7\cdot X, 7\cdot Y)$.

\begin{figure}[ht!]
\begin{minipage}[c]{0.4\textwidth}
    \centering
\footnotesize
\begin{tabular}{ccccc} 
\midrule
 & \parts & \articlesSubset & \articlesAll \\\midrule
$\#$ items & 19874 & 39700 & 534884\\
Unit $t$ & month & day & day \\
Median $CV^2$ & 2.4 & 5.8 & 9.7 \\
Freq.\ $z_t=0$ & 54\% & 46\% & 83\% \\
In-stock ratio & 100\% & 73\% & 71\% \\
Avg.\ size series & 33 & 329 & 293 \\
$\#$ item-days & 656K & 13M & 157M\\\midrule
\end{tabular}
\end{minipage}
\hfill
\begin{minipage}[c]{0.48\textwidth}
  \includegraphics[width=\textwidth]{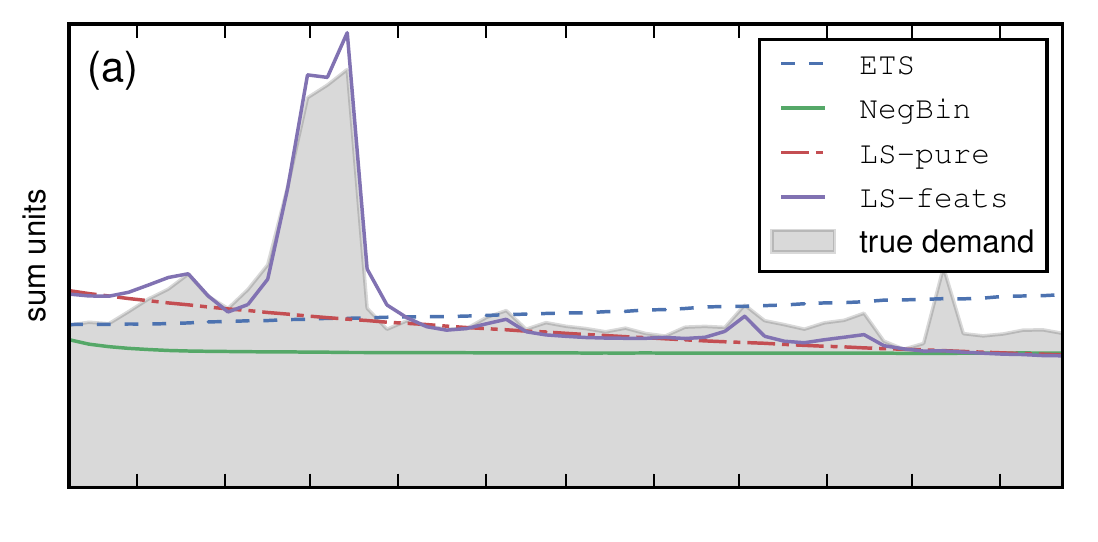}
\end{minipage}

\begin{minipage}[c]{0.5\textwidth}
  \includegraphics[width=\textwidth]{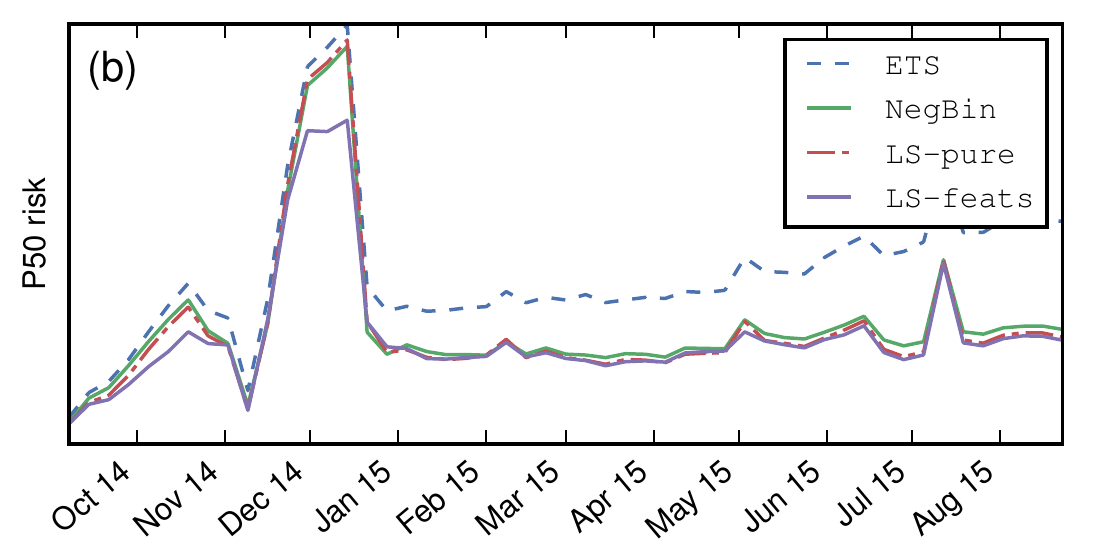}
\end{minipage}
\begin{minipage}[c]{0.50\textwidth}
  \includegraphics[width=\textwidth]{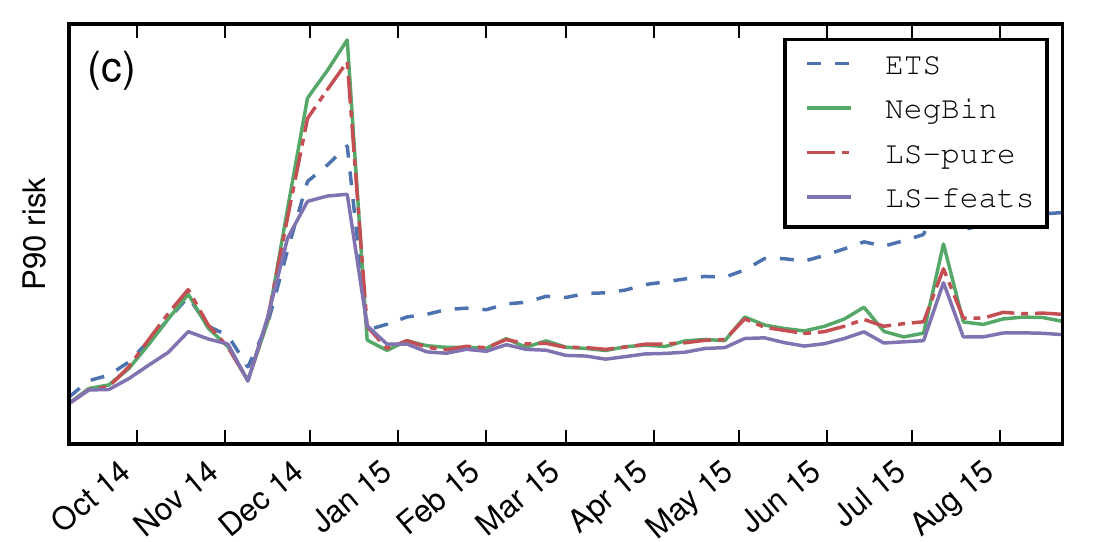}
\end{minipage}

\caption{\label{fig:stats-and-curves}
{\bf Table}: Dataset properties. $CV^2 = \Var[z_t] / \Ex[z_t]^2$ measures burstiness. 
{\bf (a)}: Sum of weekly P50 point (median) forecast over a one-year prediction range for the different methods (lines) as well as sum of true demand (shaded area), on dataset ${\cal I} = $ \articlesSubset{}.
{\bf (b)}: Weekly P50 risk $R^{0.5}[{\cal I}; (7\cdot k, 7)]$, $k=0,1,\dots$, for same dataset. {\bf (c)}: Same as (b) for P90 risk.
}
\end{figure}

We plot the P50 and P90 risk on dataset \articlesSubset{}, as well the sum of P50 point (median) forecast and the true demand, in the three panels of \figref{stats-and-curves}. All methods work well in the first week, but there are considerable differences further out. Naturally, losses are highest during the Christmas peak sales period. \issmfeatures{} strongly outperforms all others in this critical region (see \figref{stats-and-curves}, top right), by means of its features (holidays, seasonality). The Gaussian predictive distributions of \ets{} exhibit growing errors over time. With the exception of the Christmas period, \snyder{} works rather well (in particular in P50 risk), but is uniformly outperformed by both \issm{}, and \issmfeatures{} in particular.

A larger range of results are given in \tabref{parts-articlesSubset} (\parts{}, \articlesSubset{}) and \tabref{articlesAll} (\articlesAll{}), where numbers are relative to \snyder{}. Note that the R code for \ets{} could not be run on the large \articlesAll{}. On \parts{}, \snyder{} works best, yet \issm{} comes close (we did not use features on this dataset). On \articlesSubset{}, \issmfeatures{} outperforms all others in all scenarios. The featureless \snyder{} and \issm{} are comparable on this dataset. On the largest set \articlesAll{}, \issmfeatures{} generally outperforms the others, but differences are smaller.

% w0/w8 => (0, 56)
% w3/w12 => (21, 84)
\begin{table}[h]
\centering
\scriptsize
\setlength{\tabcolsep}{5pt}
\begin{tabular}{l|rrrr|rrrrrr}
\toprule
                & \multicolumn{4}{c|}{\parts}   & \multicolumn{6}{c}{\articlesSubset} \\
                & \multicolumn{2}{c}{P90 risk} & \multicolumn{2}{c|}{P50 risk}       & \multicolumn{3}{c}{P90 risk} & \multicolumn{3}{c}{P50 risk} \\
$(L,S)$         & $(0,2)$ & dy(8) & $(0,2)$ & dy(8) & $(0,56)$ & $(21,84)$ & wk(33) & $(0,56)$ & $(21,84)$ & wk(33) \\
\midrule
\ets{}          & 1.04 & 1.04 &  1.19  & 1.38                                & 0.99 & 0.75 & 1.13 & 1.07 & 1.10 & 1.18 \\
\issm{}         & 1.08 & 1.06 &  1.04  & 1.06                                & 1.07 & 0.97 & 0.99 & 0.95 & 1.03 & 0.99 \\
\issmfeatures{} &  --  &  --  &   --   &   --                                & {\bf 0.80} & {\bf 0.73} & {\bf 0.85} & {\bf 0.84} & {\bf 0.84} & {\bf 0.94} \\
\snyder{}       & {\bf 1.00} & {\bf 1.00} &  {\bf 1.00}  & {\bf 1.00}                                & 1.00 & 1.00 & 1.00 & 1.00 & 1.00 & 1.00 \\
\bottomrule
\end{tabular}
\vspace{2ex}

\caption{\label{tab:parts-articlesSubset}
  Results for dataset \parts{} (left) and \articlesSubset{} (right). Metric values relative to \snyder{} (each column). dy(8): Average of $R^{\rho}[{\cal I}; (k, 1)]$, $k=\rng[0]{7}$. wk(33): Average of $R^{\rho}[{\cal I}; (7\cdot k, 7)]$, $k=\rng[0]{32}$.}
\end{table}

\begin{table}[h]
\small
\centering
\begin{tabular}{l|rrrrrr}
\toprule
{} & \multicolumn{3}{c}{P90 risk} & \multicolumn{3}{c}{P50 risk} \\
$(L,S)$         & $(0,56)$ & $(21,84)$ & wk(33)& $(0,56)$ & $(21,84)$ & wk(33) \\
\midrule
\issm{}         & 1.11 & 1.03 & 0.99 & 1.00 & 1.03 & 1.05 \\
\issmfeatures{} & {\bf 0.95} & {\bf 0.86} & {\bf 0.89} &{\bf  0.92} & {\bf 0.88} & {\bf 0.98 }\\
\snyder{}       & 1.00 & 1.00 & 1.00 & 1.00 & 1.00 & 1.00 \\
\bottomrule
\end{tabular}
\vspace{2ex}

\caption{\label{tab:articlesAll}
  Results for dataset \articlesAll{}. Metric values relative to \snyder{} (each column). \ets{} could not be run at this scale.}
\end{table}

Finally, we report running times of parameter learning (outer optimization) for \issmfeatures{} on \articlesSubset{}. L-BFGS was run with {\tt maxIters = 55}, {\tt gradTol = $10^{-5}$}. Our experimental cluster consists of about 150 nodes, with {\tt Intel Xeon E5-2670} CPUs (4 cores) and 30GB RAM. Profiling was done separately in each stage: $k=0$ ($P5 = 0.180s$, $P50 = 1.30s$, $P95 = 2.15s$), $k=1$ ($P5 = 0.143s$, $P50 = 1.11s$, $P95 = 1.79s$), $k=2$ ($P5 = 0.138s$, $P50 = 1.29s$, $P95 = 3.25s$). Here, we quote median (P50), $5\%$ and $95\%$ percentiles (P5, P95). The largest time recorded was $10.4s$. The narrow spread of these numbers witnesses the robustness and predictability of the nested optimization process, crucial properties in the context of production systems running on parallel compute clusters.

\subsection{Modeling seasonality via latent state vs features}\label{sec:latent-vs-seasonality-exp}

So far we modeled seasonal effects via features.
%As noted in Section \ref{sec:latent-state-vs-features} a seasonality pattern can also be modeled using features
In this experiment we show why modeling seasonality pattern using features alone is not sufficient.
For this, we selected one of the time series from the \verb|electricity| dataset \citep{Yu:16} which contains hourly time series of the electricity use (kW) of $370$ customers.
We use a single stage with Gaussian likelihood for this experiment.
Moreover, we use first $120$ hours for training and $120$ hours for testing.
On the left panel of Figure \ref{fig:latent-vs-features}, we show the result of modeling the hourly seasonality pattern using an indicator feature vector.
Note that forecast start time is marked by a vertical line in the plot.
The plot shows the median as well as P10 and P90 percentiles of the forecast distribution.
Note that the feature-based model only learns the average electricity usage per hour across the five cycles of the training data (see the electricity usage values in the first few hours of everyday in the prediction range).
On the right panel we show the result of modeling the hourly seasonality via latent state.
Note that the predictions here clearly reflect the most recent behavior of the time series.

\begin{figure*}[th]
\begin{center}
\includegraphics[scale=0.28]{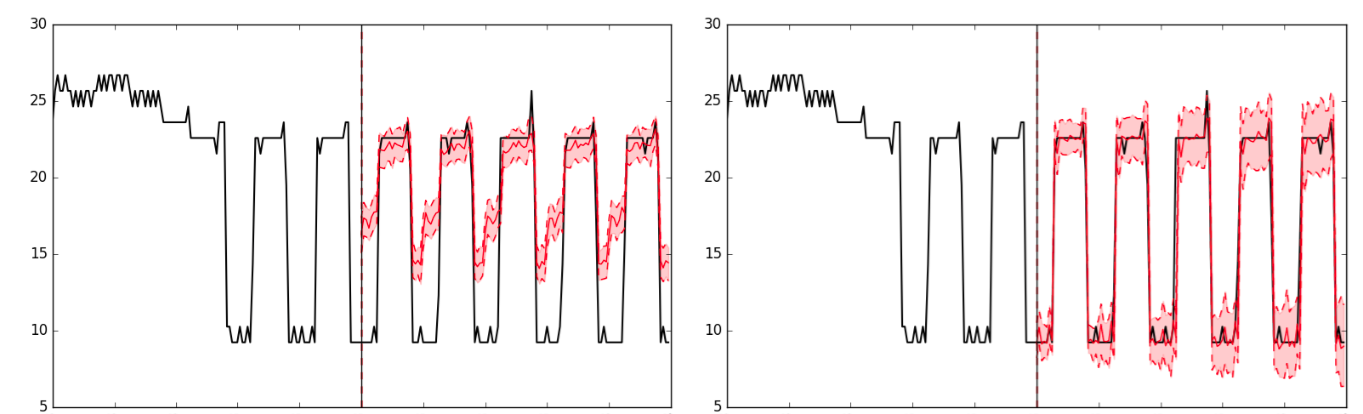}
\caption{Modeling hourly seasonality via features (left panel) versus latent state (right panel).}
\label{fig:latent-vs-features}
\end{center}
\end{figure*}

\subsection{Modeling user-defined seasonality pattern}\label{sec:custom-seasonality}

\begin{figure*}[th]
\begin{center}
\includegraphics[scale=0.28]{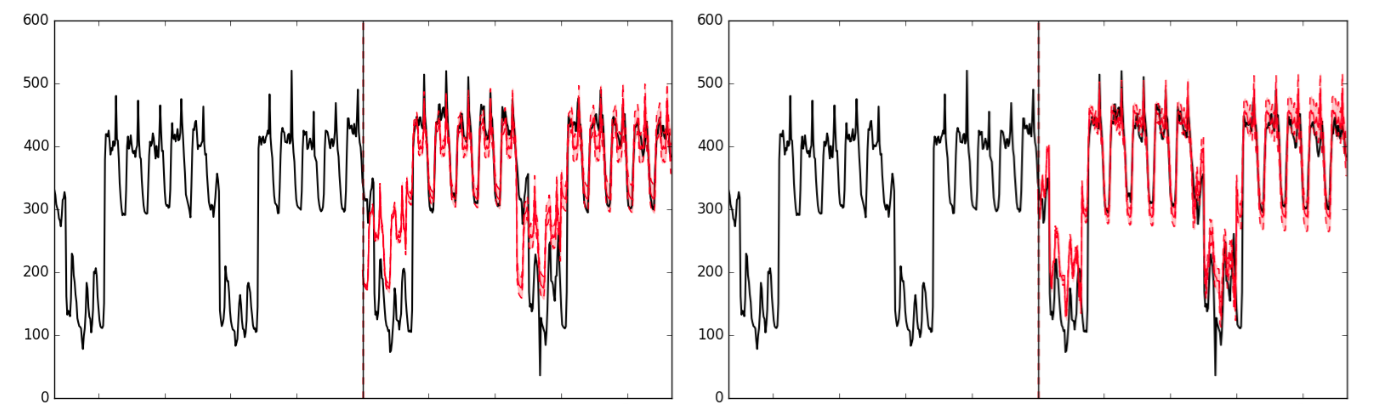}
\caption{Non-standard pattern: modeling by a combination of hourly and daily seasonal models (left panel); modeling by user-defined pattern (right panel).}
\label{fig:custom}
\end{center}
\end{figure*}

One attractive feature of the proposed latent state model is that it allows users to encode custom periodic seasonality inherent in the data.
We will illustrate this using one of the time series from the \verb|electricity| dataset shown in Figure \ref{fig:custom}.
Again we use a single stage with Gaussian likelihood for this experiment.
The model is trained on the first two weeks of hourly data and tested on the remaining two weeks.
Notice that the target clearly has an hourly seasonality pattern (looking at the training part of the time series).
However there is an exception to this pattern on two days of every week (Saturdays and Sundays).
Combining hourly seasonality ISSM with that of daily seasonality results in the forecasts shown in the left panel of Figure \ref{fig:custom}.
The forecasts on Saturdays and Sundays are way off since the hourly pattern on these days is not additive to the hourly pattern on the other days and hence cannot be captured by simply adding hourly and daily seasonality models.
%Adding a daily seasonality pattern to hourly seasonality pattern does not suffice since on these two days the hourly pattern differs 
To properly model this pattern, we define the factors as the cross product of hour and day; i.e., there is a factor for every hour and day pair. So, the length of the seasonality and the dimension of the latent state is given by $24 \times 7 = 168$.
We further combine Monday to Friday into a single group since the hourly pattern is similar on these days reducing the latent state dimension to $72$.
The corresponding result is shown in the right panel of Figure \ref{fig:custom} which shows that the model correctly picked up the seasonality for all days.

\section{Conclusions. Future Work}\label{sec:conclus}

In this paper, we developed a framework for maximum likelihood learning of probabilistic latent state forecasting models, which can be seen as principled time series extensions of generalized linear models. We pay special attention to the intermittent and bursty statistics of demand, characteristic for the vast inventories maintained by large retailers or e-commerce platforms. We show how approximate Bayesian inference techniques can be implemented in a robust and highly scalable way, so to enable a forecasting system which runs safely on hundred of thousands of items and hundreds of millions of item-days.

We can draw some conclusions from our comparative study on a range of real-world datasets. Our proposed method strongly outperforms competitors on sales data from fast and medium moving items. Besides good short term forecasts due to temporal smoothness and well-calibrated growth of uncertainty, our use of a feature vector seems most decisive for medium term forecasts. On slow moving items, simpler methods like \snyder{} \citep{Snyder:12} are competitive, even though they lack signal models which could be learned from data.

We are investigating several directions for future work. Our current system uses time-independent ISSMs, in particular $\vg{t}=[\alpha]$ means that the same amount of innovation variance is applied every day. This assumption is violated by our data, where a lot more variation happens in the weeks leading up to Christmas or before major holidays than during the rest of the year. To this end, we are exploring learning two parameters: $\alpha_h$ during high-variation periods, and $\alpha_l$ for all remaining days. 
%We also plan to augment the state $\vl{t}$ by seasonality\footnote{
%	Currently, periodic seasonality is dealt with by features in $\vx{t}$.}
%factors \citep[Sect.~14]{Hyndman:08} (both $\va{t}$, $\vg{t}$ depend on time then).

One of the most important future directions is to learn about and exploit dependencies between the demand time series of different items. In fact, the strategy to learn and forecast each item independently is not suitable for items with a short demand history, or for slow moving items. One approach we pursue is to couple latent processes by a shared (global) linear or non-linear function.

%% The Appendices part is started with the command \appendix;
%% appendix sections are then done as normal sections
\appendix

\section*{Appendix: Overview}
We collect all technical details as well as extensions of the proposed method here.
The appendix is organized as follows. 
First, in Appendix~\ref{app:Inference}, we discuss the posterior inference problem in the Gaussianized latent state model. 
Here we derive the forward (filtering equations) and backward messages (smoothing equations) which are posterior marginals of the latent variables conditioned on observations.
Appendix~\ref{app:ISF} then discusses the numerically stable implementation of these messages using the square root information filter \citep{Hyndman:08} along with some enhancements.
In a pure Bayesian inference one would integrate over the feature weights $\vw{}$ instead of estimating them as part of the parameters.
Appendix~\ref{app:InfOverW} discusses this inference over the weights $\vw{}$.
Our model can easily be extended to the case of missing observations which is discussed in Appendix~\ref{app:MissingObs}.

%Recall that Laplace approximation is used for non-Gaussian likelihoods for making the inference tractable.
Recall that we need to find the mode of the complete likelihood (Algorithm \ref{alg:Mode}) for the Laplace approximation. 
This is done using Newton's method where the Newton's step is reduced Kalman smoothing.
Since the full Newton step can overshoot we discuss the line search as well as good initialization for this minimization problem in Appendix~\ref{app:LineSearch}.
Finally, in Appendix~\ref{app:Gradient} we discuss the computation of the gradient w.r.t the parameters $\vth{}$ of the outer criterion $\psi(\vth{})$, which is minimized by L-BFGS.

\section{Inference in the Latent State Model}
\label{app:Inference}

Each iteration of mode finding algorithm (Algorithm \ref{alg:Mode}) requires the minimizer of $\tilde{F}(\veps{},\vl{0})$ which is the negative log-joint of a linear-Gaussian model with latent variables $\veps{}$, $\vl{0}$ and observations $\tvz{}$.
Since mean and mode are identical for such models, this step is equivalent to computing the posterior means $\Ex[\veps{} | \tvz{}]$ and $\Ex[\vl{0} | \tvz{}]$  in this model.
We now focus on this posterior inference in the linear-Gaussian model for fixed potentials $N(\tsz{t} | y_t,\sigma_t^2)$.
In order to make the posterior inference in the ``Gaussianized'' model tractable, we need to represent $\vl{t}, t = 1, \ldots, T - 1,$ also as latent variables. 
Let ${\cal D} = \{\tsz{1}, \ldots, \tsz{T}\}$ and $\tsz{\le t} = \{\tsz{1}, \ldots, \tsz{t}\}$.
Our goal is to compute $P(\vl{0} | {\cal D})$.
The graphical representation of the Gaussianized model is given in Figure \ref{fig:graphical-model}.
The moral graph for this model has cliques $\{\vl{t-1}, \vl{t}, \eps_t \}$, and messages are joint distributions over these. However, at least the forward messages, which are posterior distributions conditioned on $\tsz{\le t }$, have the following restricted form:
\begin{align}\label{eq:forward_messages}
q_t(\vl{t-1},\eps_t,\vl{t}) = P(\vl{t} | \vl{t-1},\eps_t) N(\eps_t |
0,1) q_t(\vl{t-1}),
\end{align}
where $q_t(\vl{t-1})$ is a multivariate Gaussian. This is the only distribution we have to propagate forward. 
We first derive the forward messages $q_{t + 1}(\vl{t}), t = 0, \ldots, T - 1$. %which are distributions conditioned on $\tsz{\le t + 1}$.
Note that these are the standard Kalman filtering equations for the innovation state space model.

  \begin{figure}
  \centering
    \includegraphics[scale=0.42]{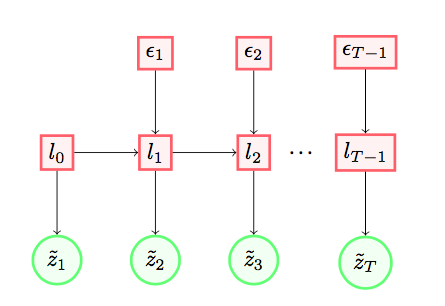}  
    \caption{Graphical representation of the Gaussianized latent state model} 
    \label{fig:graphical-model}    
  \end{figure}  

\begin{proposition}
	Let $q_{t+1}(\vl{t})$ denote the marginal distribution $P(\vl{t} | \tsz{\le t +1})$ for $t = 0, \ldots, T-1$, where $\tsz{\le t} = \{\tsz{1}, \ldots, \tsz{t}\}$. 
	Then $q_{t + 1}(\vl{t}) = N(\vl{t} | \vmu{t}, \mxsigma{t}), \forall t$.
	The mean $\vmu{t}$ and the covariance $\mxsigma{t}$ for $t = 1, \ldots, T - 1$ are given by the forward recursion
	\begin{align*}
	\vmu{t} 	&= \mxf{} \vmu{t - 1} + \vk{t} (\tsz{t + 1} - b_{t + 1} - \va{t+ 1}^T \mxf{} \vmu{t - 1}) \\
	% (\mxi{} - \vk{t} \va{t + 1}^T) \left(\va{t + 1} (\sigma^2_{t+1})^{-1} (\tsz{t + 1} - b_{t + 1}) \mxp{t - 1} + \mxf{} \vmu{t - 1} \right)\\ 
	\mxsigma{t} &= (\mxi{} - \vk{t} \va{t + 1}^T) \mxp{t - 1}
	\end{align*}
	where we used the notation
	\begin{align*}
	\mxp{t - 1} &= \mxf{} \mxsigma{t - 1} \mxf{}^T + \vg{t} \vg{t}^T\\
	\vk{t} &= (\va{t + 1}^T \mxp{t - 1} \va{t + 1} + \sigma^2_{t+1})^{-1} \mxp{t - 1} \va{t + 1} 
	\end{align*}
	The initial conditions ($t = 0$) for the recursion are given by
	\begin{align*}
	\vmu{0} &= \bvmu{0} + \vk{0} (\tsz{1} - b_1 - \va{1}^T \vmu{0})\\
	%(\mxi{} - \bmxsigma{0} \va{1} (\sigma_1^2  + \va{1}^T \bmxsigma{0} \va{1})^{-1} \va{1}^T) 
	%\left(\bmxsigma{0} \va{1} (\sigma_0^2)^{-1} (\tsz{1} - b_1) + \bvmu{0}  \right)\\
	\mxsigma{0} &= (\mxi{} - \vk{0} \va{1}^T) \bmxsigma{0},
	%(\mxi{} - \bmxsigma{0} \va{1} (\sigma_1^2  + \va{1}^T \bmxsigma{0} \va{1})^{-1} \va{1}^T) \bmxsigma{0},
	\end{align*}
	where $\bvmu{0}, \bmxsigma{0}$ are respectively the mean and covariance of the prior $P(\vl{0})$ on the initial state and $\vk{0} = \bmxsigma{0} \va{1} (\sigma_1^2  + \va{1}^T \bmxsigma{0} \va{1})^{-1}$.
\end{proposition}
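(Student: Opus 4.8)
The plan is to prove the claim by induction on $t$, recognizing the recursion as the standard Kalman filter specialized to the ISSM \eqp{issm}. The key indexing convention to keep straight is that, since $y_{t+1}=\va{t+1}^\top\vl{t}+b_{t+1}$, the Gaussianized observation $\tsz{t+1}$ is the \emph{first} one that is linear in $\vl{t}$; hence information about $\vl{t}$ enters through $\tsz{t+1}$, and $q_{t+1}(\vl{t})=P(\vl{t}\mid\tsz{\le t+1})$ is the corresponding filtered marginal. I would first record the structural fact underlying \eqp{forward_messages}: because $\eps_t$ influences only $\vl{t}$ and future observations, while $\tsz{\le t}$ depends on $\vl{t-1}$ and earlier states only, the innovation $\eps_t$ remains $N(0,1)$ and independent of $\vl{t-1}$ after conditioning on $\tsz{\le t}$. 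This is exactly what lets us carry forward only the Gaussian $q_t(\vl{t-1})=N(\vl{t-1}\mid\vmu{t-1},\mxsigma{t-1})$, which I take as the inductive hypothesis.

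The inductive step splits into a prediction and a correction half, mirroring the two factors of \eqp{forward_messages}. For the \textbf{prediction} step, I substitute the hypothesis $\vl{t-1}\sim N(\vmu{t-1},\mxsigma{t-1})$ and an independent $\eps_t\sim N(0,1)$ into the deterministic update $\vl{t}=\mxf{}\vl{t-1}+\vg{t}\eps_t$ and read off that $\vl{t}\mid\tsz{\le t}$ is Gaussian with mean $\mxf{}\vmu{t-1}$ and covariance $\mxp{t-1}=\mxf{}\mxsigma{t-1}\mxf{}^\top+\vg{t}\vg{t}^\top$. This is an elementary affine-image-of-a-Gaussian computation and establishes the stated form of $\mxp{t-1}$.

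For the \textbf{correction} step, the computational heart of the proof, I form the joint law of $(\vl{t},\tsz{t+1})$ conditioned on $\tsz{\le t}$. It is jointly Gaussian with $\vl{t}$-marginal $N(\mxf{}\vmu{t-1},\mxp{t-1})$, scalar observation mean $\va{t+1}^\top\mxf{}\vmu{t-1}+b_{t+1}$, cross-covariance $\mxp{t-1}\va{t+1}$, and observation variance $\va{t+1}^\top\mxp{t-1}\va{t+1}+\sigma_{t+1}^2$. Applying the standard Gaussian conditioning formula then yields $q_{t+1}(\vl{t})=N(\vl{t}\mid\vmu{t},\mxsigma{t})$ with Kalman gain $\vk{t}=\mxp{t-1}\va{t+1}(\va{t+1}^\top\mxp{t-1}\va{t+1}+\sigma_{t+1}^2)^{-1}$, mean $\vmu{t}=\mxf{}\vmu{t-1}+\vk{t}(\tsz{t+1}-b_{t+1}-\va{t+1}^\top\mxf{}\vmu{t-1})$, and covariance $\mxsigma{t}=(\mxi{}-\vk{t}\va{t+1}^\top)\mxp{t-1}$, matching the claim. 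Since $\tsz{t+1}$ is scalar, the matrix ``inverse'' is just the reciprocal of a scalar, so no matrix inversion lemma is needed; the only genuine bookkeeping is keeping the index shift straight and confirming that the conditional covariance collapses to $(\mxi{}-\vk{t}\va{t+1}^\top)\mxp{t-1}$.

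Finally, the base case $t=0$ is the same correction applied to the prior $\vl{0}\sim N(\bvmu{0},\bmxsigma{0})$ in place of a predicted distribution: here there is no prediction half, since $\vl{0}$ is given directly by its prior, so one conditions $\vl{0}$ on $\tsz{1}$ through the likelihood $N(\tsz{1}\mid\va{1}^\top\vl{0}+b_1,\sigma_1^2)$, giving $\vk{0}=\bmxsigma{0}\va{1}(\sigma_1^2+\va{1}^\top\bmxsigma{0}\va{1})^{-1}$ and the stated $\vmu{0}=\bvmu{0}+\vk{0}(\tsz{1}-b_1-\va{1}^\top\bvmu{0})$, $\mxsigma{0}=(\mxi{}-\vk{0}\va{1}^\top)\bmxsigma{0}$. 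I expect the main obstacle to be organizational rather than mathematical: making the index conventions (that $\tsz{t+1}$ carries information about $\vl{t}$, and the $t-1$ label on $\mxp{t-1}$) fully consistent, and justifying once and for all, via the independence argument above, that every forward message stays exactly Gaussian of the restricted form \eqp{forward_messages}, so that the two-step recursion is \emph{exact} for the Gaussianized model rather than approximate.
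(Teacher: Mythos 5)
Your proof is correct, and its skeleton --- induction over $t$, a prediction step followed by a correction step, with the base case obtained by conditioning the prior $N(\bvmu{0},\bmxsigma{0})$ on $\tsz{1}$ --- is the same as the paper's. The genuine difference lies in the tool used for the correction step. The paper works in precision (information) form: it writes the product $N(\tsz{t+1}\mid \va{t+1}^T\vl{t}+b_{t+1},\sigma_{t+1}^2)\,N(\vl{t}\mid \mxf{}\vmu{t-1},\mxp{t-1})$, reads off the posterior precision $\mxp{t-1}^{-1}+\sigma_{t+1}^{-2}\va{t+1}\va{t+1}^T$, and then needs the matrix inversion identity $(\mxa{}+\mxb{}\mxd{}^{-1}\mxc{})^{-1}=\mxa{}^{-1}-\mxa{}^{-1}\mxb{}(\mxd{}+\mxc{}\mxa{}^{-1}\mxb{})^{-1}\mxc{}\mxa{}^{-1}$, plus some algebra on the information-form mean, to arrive at $\mxsigma{t}=(\mxi{}-\vk{t}\va{t+1}^T)\mxp{t-1}$ and the stated $\vmu{t}$. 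You instead form the joint Gaussian of $(\vl{t},\tsz{t+1})$ given $\tsz{\le t}$, with cross-covariance $\mxp{t-1}\va{t+1}$ and scalar observation variance $\va{t+1}^T\mxp{t-1}\va{t+1}+\sigma_{t+1}^2$, and invoke the Gaussian conditioning (Schur complement) formula; this lands directly on the stated mean and covariance with no Woodbury step and only a scalar reciprocal. The two routes are equivalent, but yours buys a shorter correction step, while the paper's factorization $P(\tsz{t+1},\vl{t}\mid\tsz{\le t}) = P(\tsz{t+1}\mid\tsz{\le t})\,q_{t+1}(\vl{t})$ makes the predictive law $P(\tsz{t+1}\mid\tsz{\le t})$ explicit, which it reuses later for the marginal-likelihood computation (your joint-Gaussian view yields the same law as the $\tsz{t+1}$-marginal, so nothing is lost). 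Two further points in your favor: you explicitly justify why the forward message retains the restricted form of \eqref{forward_messages} --- $\eps_t$ is independent of $\vl{t-1}$ and of $\tsz{\le t}$, since the latter are functions of $\vl{0},\eps_1,\dots,\eps_{t-1}$ and the observation noise --- a fact the paper asserts without argument; and your base-case mean $\bvmu{0}+\vk{0}(\tsz{1}-b_1-\va{1}^T\bvmu{0})$ silently corrects what is evidently a typo in the statement (and in the last line of the paper's own base-case derivation), where $\va{1}^T\vmu{0}$ appears in place of $\va{1}^T\bvmu{0}$.
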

\begin{proof}
	In the proof, we write $N(\tsz{t} | \vl{t-1})$ for the conditional Gaussian $N(\tsz{t} | y_t,\sigma_t^2)$, where
	\[
	y_{t} = \va{t}^T\vl{t-1} + b_t,\quad b_t = \vx{t}^T\vw{}.
	\]
	The forward pass computes the marginals $q_t(\vl{t - 1})$ for $t = 1, \ldots, T$.
	First, 
	\[
	q_{1}(\vl{0})  = P(\vl{0} | \tsz{1})
	= \frac{P(\vl{0}) N(\tsz{1} | \vl{0})} {P(\tsz{1})}.
	\]
	Since we assume $P(\vl{0})$ is Gaussian given by $N(\vl{0} | \bvmu{0}, \bmxsigma{0})$, it follows that $q_1(\vl{0})$ is also Gaussian.
	Its covariance $\mxsigma{0}$ and mean $\vmu{0}$ can be derived, using standard techniques for manipulating Gaussians, as
	\begin{align*}
	\mxsigma{0} &= \left(\bmxsigma{0}^{-1} + \va{1} (\sigma_1^2)^{-1} \va{1}^T\right)^{-1}\\
	&\stackrel{(s)}{=} \bmxsigma{0} - \bmxsigma{0} \va{1} (\sigma_1^2  + \va{1}^T \bmxsigma{0} \va{1})^{-1} \va{1}^T \bmxsigma{0}\\
%	&= (\mxi{} - \bmxsigma{0} \va{1} (\sigma_1^2  + \va{1}^T \bmxsigma{0} \va{1})^{-1} \va{1}^T) \bmxsigma{0}\\
	&= (\mxi{} - \vk{0} \va{1}^T) \bmxsigma{0}\\		
	\vmu{0} 	&= \mxsigma{0} \left(\va{1} (\sigma_0^2)^{-1} (\tsz{1} - b_1) + \bmxsigma{0}^{-1} \bvmu{0}  \right)\\
	%&= (\mxi{} - \bmxsigma{0} \va{1} (\sigma_1^2  + \va{1}^T \bmxsigma{0} \va{1})^{-1} \va{1}^T)
	&= (\mxi{} - \vk{0} \va{1}^T)
	\left(\bmxsigma{0} \va{1} (\sigma_1^2)^{-1} (\tsz{1} - b_1) + \bvmu{0}  \right)\\
	&= (\mxi{} - \vk{0} \va{1}^T)
	\bmxsigma{0} \va{1} (\sigma_1^2)^{-1} (\tsz{1} - b_1) + \bvmu{0} - \vk{0} \va{1}^T \bvmu{0}\\
%	&= (\bmxsigma{0} \va{1}  - \vk{0} \va{1}^T \bmxsigma{0} \va{1} )
%	(\sigma_1^2)^{-1} (\tsz{1} - b_1) + \bvmu{0} - \vk{0} \va{1}^T \bvmu{0}\\
	&= \bmxsigma{0} \va{1}( 1 - (\sigma_1^2  + \va{1}^T \bmxsigma{0} \va{1})^{-1} \va{1}^T \bmxsigma{0} \va{1} )
	(\sigma_1^2)^{-1} (\tsz{1} - b_1) + \bvmu{0} - \vk{0} \va{1}^T \bvmu{0}\\
	&= \vk{0} (\tsz{1} - b_1) + \bvmu{0} - \vk{0} \va{1}^T \bvmu{0}\\
	&= \bvmu{0} + \vk{0} (\tsz{1} - b_1 - \va{1}^T \vmu{0}),
	\end{align*}
	where in step (s), we used the matrix identity $(\mxa{} + \mxb{} \mxd{}^{-1} \mxc{})^{-1} = \mxa{}^{-1} - \mxa{}^{-1} \mxb{} (\mxd{} + \mxc{} \mxa{}^{-1} \mxb{})^{-1} \mxc{} \mxa{}^{-1}$.
	
	For forward recursion, we have for $t = 1, \dots, T - 1$, 
	\begin{align*}
	q_{t+1}(\vl{t})  = P(\vl{t} | \tsz{\le t + 1})
	= \frac{P(\vl{t}, \tsz{t + 1} | \tsz{\le t})} {P(\tsz{t + 1} | \tsz{\le t})}.
	%		q_1(\vl{0}) &\propto N(\tsz{1} | \vl{0}) P(\vl{0}).
	\end{align*}	
	By rearranging the terms, we get
	\begin{align*}
	P(\tsz{t + 1} | \tsz{\le t}) q_{t+1}(\vl{t}) &= 
	P(\tsz{t + 1} | \vl{t}, \tsz{\le t}) P(\vl{t} | \tsz{\le t}) \\
	&= N(\tsz{t+1} | \vl{t}) \int q(\vl{t}, \vl{t - 1}, \eps_t)\, d\eps_t d\vl{t-1}\\
%	&= N(\tsz{t+1} | \vl{t}) \int P(\vl{t - 1} | \tsz{\le t}) P(\eps_t | \vl{t - 1}, \tsz{\le t} ) P(\vl{t} | \vl{t - 1}, \eps_t, \tsz{\le t})\, d\eps_t d\vl{t-1}\\
%	% &= \int q_t(\vl{t - 1}) P(\eps_t | \vl{t - 1}, \tsz{\le t} ) P(\vl{t} | \vl{t - 1}, \eps_t) P(\tsz{t + 1} | \vl{t})\, d\eps_t d\vl{t-1}\\
	&= N(\tsz{t+1} | \vl{t}) \int P(\vl{t} | \vl{t-1},\eps_t) N(\eps_t | 0,1) q_t(\vl{t-1})\, d\eps_t d\vl{t-1},
	\end{align*}
	where in the last step we used (\ref{eq:forward_messages}).
%	where we used the conditional independence properties 
%	\begin{align*}
%	P(\eps_t | \vl{t - 1}, \tsz{\le t} ) 		&= P(\eps_t )\\
%	P(\vl{t} | \vl{t - 1}, \eps_t, \tsz{\le t}) &= P(\vl{t} | \vl{t - 1}, \eps_t)\\ 
%	P(\tsz{t + 1} | \vl{t}, \tsz{\le t}) &= P(\tsz{t + 1} | \vl{t})
%	\end{align*}
	Note that $P(\vl{t} | \vl{t - 1}, \eps_t)$ is a delta distribution given by %(\textcolor{red}{in other words, it is the Dirac delta function $\delta(\mxf{} \vl{t - 1} + \vg{t} \eps_t - \vl{t})$}) 
	$\vl{t} = \mxf{} \vl{t - 1} + \vg{t} \eps_t$.
	The integral is then given by
	\[ \vl{t} = \mxf{} \vl{t - 1} + \vg{t} \eps_t,\ \vl{t - 1} \sim q_t,\ \eps_t \sim N(0, 1). \]
	Hence, the above equality reduces to 
	\begin{align*} 
	P(\tsz{t + 1} | \tsz{\le t}) q_{t+1}(\vl{t}) &= N(\tsz{t+1} | \va{t + 1}^T\vl{t} + b_{t+1}, \sigma_{t + 1}^2)\ N(\vl{t} | \mxf{} \vmu{t - 1}, \mxp{t-1}),
	\end{align*}	
	where we used the notation $\mxp{t - 1} = \mxf{} \mxsigma{t - 1} \mxf{}^T + \vg{t}\vg{t}^T$.
	This equation represents the reformulation of the joint distribution $P(\tsz{t + 1}, \vl{t} | \tsz{\le t})$ as the equivalent factors $P(\tsz{t + 1} | \tsz{\le t}) P(\vl{t}| \tsz{\le t + 1})$ and $P(\vl{t} | \tsz{\le t}) P(\tsz{t + 1} | \vl{t}, \tsz{\le t})$.
	Since the factors on the right hand side are Gaussians, it follows that $P(\tsz{t + 1} | \tsz{\le t})$ and $q_{t+1}(\vl{t})$ are Gaussians as well.
	Using the Bayesian theorem for Gaussians, we obtain
	\[ P(\tsz{t + 1} | \tsz{\le t}) = N(\tsz{t+1} | \va{t + 1}^T \mxf{} \vmu{t - 1} + b_{t+1}, \va{t + 1}^T \mxp{t - 1} \va{t + 1} + \sigma_{t + 1}^2). \]
	The covariance $\mxsigma{t}$ of $q_{t + 1}(\vl{t})$ is given by
	\begin{align*} 
	\mxsigma{t} &= (\mxp{t - 1}^{-1} + (\sigma^2_{t + 1})^{-1} \va{t + 1} \va{t + 1}^T)^{-1}\\
	&= \mxp{t - 1} - \mxp{t - 1} \va{t + 1} (\sigma^2_{t + 1} + \va{t + 1}^T \mxp{t - 1} \va{t + 1}) ^{-1} \va{t + 1}^T \mxp{t - 1}\\
	&= (\mxi{} - \vk{t} \va{t + 1}^T) \mxp{t - 1}
	\end{align*}
	where for the second equality, we used the matrix identity $(\mxa{} + \mxb{} \mxd{}^{-1} \mxc{})^{-1} = \mxa{}^{-1} - \mxa{}^{-1} \mxb{} (\mxd{} + \mxc{} \mxa{}^{-1} \mxb{})^{-1} \mxc{} \mxa{}^{-1}$ and for the last equality we used the notation $\vk{t} = (\sigma^2_{t+1} + \va{t + 1}^T \mxp{t - 1} \va{t + 1})^{-1} \mxp{t - 1} \va{t + 1}$.
	The mean $\vmu{t}$ of $q_{t + 1}(\vl{t})$ can be deduced as
	\begin{align*}
	\vmu{t} &= \mxsigma{t} \left(\va{t + 1} (\sigma^2_{t+1})^{-1} (\tsz{t + 1} - b_{t + 1}) + \mxp{t - 1}^{-1} \mxf{} \vmu{t - 1} \right) \\
	&= (\mxi{} - \vk{t} \va{t+ 1}^T) \left(\va{t + 1} (\sigma^2_{t+1})^{-1} (\tsz{t + 1} - b_{t + 1}) \mxp{t - 1} + \mxf{} \vmu{t - 1} \right)\\
	&= \vk{t} (\tsz{t + 1} - b_{t + 1}) + \mxf{} \vmu{t - 1} - \vk{t} \va{t+ 1}^T \mxf{} \vmu{t - 1}\\
	&= \mxf{} \vmu{t - 1} + \vk{t} (\tsz{t + 1} - b_{t + 1} - \va{t+ 1}^T \mxf{} \vmu{t - 1}).
	\end{align*}
	%	\begin{align*}
	%		\vmu{t} 		&= \va{}^T \mxf{} \vmu{t - 1} + \vk{t + 1} (\tsz{t+1} - \va{}^T \mxf{} \boldsymbol{\mu}_t) + b_{t+1}\\
	%		\mxv{t + 1} 	&= (\mxi{} - \vk{t+1} \va{}^T) \mxp{t}\\
	%%		c_{t} 			&= N(\tsz{t+1} | \va{}^T \mxf{} \boldsymbol{\mu}_{n-1} + b_{t+1}, \va{}^T \mxp{t} \va{} + \sigma_{t + 1}^2)
	%	\end{align*}
	%	where 
	%	\begin{align*}
	%		\mxp{t} &= \mxf{} \mxv{t} \mxf{}^T + \vg{} \vg{}^T\\
	%		\vk{t + 1} &= (\va{}^T \mxp{t} \va{} + \sigma^2_{t+1}) \mxp{t} \va{} 
	%	\end{align*}
\end{proof}
Next we derive the backward messages $r_t(\vs{t})$, which are the  posterior marginals $P(\vs{t} | {\cal D})$ for the latent variables $\vs{t} = [\vl{t}, \epsilon_t]$, $t = 1, \ldots, T-1$.
\begin{proposition}	
	Let $r_t(\vs{t})$ denote the posterior marginals $P(\vs{t} | {\cal D})$ of the latent variables $\vs{t} = [\vl{t}, \epsilon_t]$, $t = 1, \ldots, T-1$.
	Then we have
	\[ r_t(\vs{t}) = q_t(\eps_t | \vl{t}) r_t(\vl{t}),\]
	where
	$q_t(\epsilon_t | \vl{t}) = N(\eps_t | m_{t}^{\eps} + \vd{t}^T \vl{t}, v_{t}^\eps)$, 
    $\vd{t} =  v_{t}^{\eps} \mxf{}^{-T} \mxsigma{t-1}^{-1} \vh{t} $,
	$m_{t}^\eps = - v_{t}^\eps \vmu{t - 1}^T \mxsigma{t - 1}^{-1} \vh{t}$ 
	and $v_{t}^{\epsilon} = (1 + \vh{t}^T \mxsigma{t-1}^{-1} \vh{t})^{-1}$, $\vh{t} = \mxf{}^{-1} \vg{t}$.
	 %$q_t(\epsilon_t | \vl{t}) = N(\eps_t | v_{t}^{\epsilon} \left( \mxf{}^{-1}\vl{t} - \vmu{t-1}
	%\right)^T \mxsigma{t-1}^{-1}\vh{t}, v_{t}^{\epsilon})$ and $v_{t}^{\epsilon} = (1 + \vh{t}^T \mxsigma{t-1}^{-1} \vh{t})^{-1}$, $\vh{t} = \mxf{}^{-1} \vg{t}$.
	Moreover, $r_t(\vl{t}) = N(\vl{t} | \hvmu{t}, \hmxsigma{t}), \forall t$. 
	The mean $\hvmu{t}$ and the covariance $\hmxsigma{t}$ for $t = 0, \ldots, T-2$ are given by the backward recursion
	\begin{align*}
	\hvmu{t} 		&= \mxf{}^{-1} \hvmu{t+1} - \vh{t+1} \vd{t+1}^T \hvmu{t+1} -  m_{t+1}^\eps \vh{t+1}\\
	\hmxsigma{t} 	&= \mxf{}^{-1}(\Id - \vg{t+1} \vd{t+1}^T )  \hmxsigma{t+1}  (\Id - \vg{t+1} \vd{t+1}^T )^T \mxf{}^{-T} + \vh{t+1} v_{t+1}^\eps  \vh{t+1}^T.
	\end{align*}
	%	where $q_{t+1}(\vl{t} | \vl{t+1}, \eps_{t+1})$ is a delta distribution given by $\vl{t} = \mxf{}^{-1}( \vl{t+1} - \vg{}\eps_{t+1} )$.
	The initial condition ($t = T - 1$) for the recursion is given by
	\[  r_{T-1}(\vl{T-1}) = q_T(\vl{T-1}). \]
	The posterior over the initial state $P(\vl{0}|\cal D)$ is given by $r_0(\vl{0})$.
\end{proposition}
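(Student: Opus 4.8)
The plan is to establish the two asserted structural facts in turn---first the conditional factorization $r_t(\vs{t}) = q_t(\eps_t\mid\vl{t})\,r_t(\vl{t})$, then the Gaussian form of each factor together with the backward recursion for $r_t(\vl{t})$. Both rest on the Markov structure of the innovation state space model, in which $\eps_t$ enters the transition $\vl{t} = \mxf{}\vl{t-1} + \vg{t}\eps_t$ and influences the future only through $\vl{t}$. Writing ${\cal D} = \tsz{\le T}$, I would first argue that $P(\eps_t\mid\vl{t},{\cal D}) = P(\eps_t\mid\vl{t},\tsz{\le t})$: conditioned on $\vl{t}$, the innovation $\eps_t$ is independent of the future observations $\tsz{>t}$, since these depend on $\eps_t$ only through $\vl{t}$. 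Hence $r_t(\vs{t}) = P(\eps_t\mid\vl{t},{\cal D})\,P(\vl{t}\mid{\cal D}) = q_t(\eps_t\mid\vl{t})\,r_t(\vl{t})$, where $q_t(\eps_t\mid\vl{t}) := P(\eps_t\mid\vl{t},\tsz{\le t})$ is a purely forward quantity.

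Next I would derive $q_t(\eps_t\mid\vl{t})$ by Gaussian conditioning. Inverting the transition gives $\vl{t-1} = \mxf{}^{-1}\vl{t} - \vh{t}\eps_t$ with $\vh{t} = \mxf{}^{-1}\vg{t}$. Substituting this into the product of the filtered marginal $q_t(\vl{t-1}) = N(\vl{t-1}\mid\vmu{t-1},\mxsigma{t-1})$ (Proposition~1) and the innovation prior $N(\eps_t\mid 0,1)$, and collecting the terms quadratic and linear in $\eps_t$ at fixed $\vl{t}$, yields a Gaussian in $\eps_t$ whose variance is $v_t^\eps = (1 + \vh{t}^T\mxsigma{t-1}^{-1}\vh{t})^{-1}$ and whose mean is affine in $\vl{t}$ with slope $\vd{t} = v_t^\eps\mxf{}^{-T}\mxsigma{t-1}^{-1}\vh{t}$ and offset $m_t^\eps = -v_t^\eps\vmu{t-1}^T\mxsigma{t-1}^{-1}\vh{t}$, matching the stated expressions.

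The heart of the argument is the backward recursion, and the key observation is that, given $(\vl{t+1},\eps_{t+1})$, the earlier state is determined \emph{deterministically}: $\vl{t} = \mxf{}^{-1}\vl{t+1} - \vh{t+1}\eps_{t+1}$. Therefore $r_t(\vl{t}) = P(\vl{t}\mid{\cal D})$ is simply the pushforward of the joint smoothed law $r_{t+1}(\vs{t+1}) = q_{t+1}(\eps_{t+1}\mid\vl{t+1})\,r_{t+1}(\vl{t+1})$ under this affine map. Writing $\eps_{t+1} = m_{t+1}^\eps + \vd{t+1}^T\vl{t+1} + \sqrt{v_{t+1}^\eps}\,\xi$ with $\xi\sim N(0,1)$ independent of $\vl{t+1}\sim N(\hvmu{t+1},\hmxsigma{t+1})$, substitution gives $\vl{t} = (\mxf{}^{-1} - \vh{t+1}\vd{t+1}^T)\vl{t+1} - m_{t+1}^\eps\vh{t+1} - \sqrt{v_{t+1}^\eps}\,\vh{t+1}\xi$. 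Reading off the mean and covariance of this Gaussian, and using $\mxf{}^{-1} - \vh{t+1}\vd{t+1}^T = \mxf{}^{-1}(\Id - \vg{t+1}\vd{t+1}^T)$ (since $\vh{t+1} = \mxf{}^{-1}\vg{t+1}$), reproduces the stated recursion for $\hvmu{t}$ and $\hmxsigma{t}$. The base case $r_{T-1}(\vl{T-1}) = q_T(\vl{T-1})$ holds because the filtered and smoothed marginals coincide at the terminal step, as ${\cal D} = \tsz{\le T}$; iterating the recursion down to $t=0$ then yields $P(\vl{0}\mid{\cal D}) = r_0(\vl{0})$.

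I expect the main obstacle to be conceptual rather than computational: recognizing that the smoothed marginal $r_t(\vl{t})$ need not be assembled by an RTS-style fusion of a forward and a backward filter, but can be obtained directly as the image of the joint smoothed distribution of $(\vl{t+1},\eps_{t+1})$ under the invertible affine transition. Once this is in place, the remaining work is routine manipulation of Gaussians, the only points requiring care being the conditional-independence justification of the factorization and the bookkeeping of $\mxf{}^{-1}$, $\mxf{}^{-T}$, and $\vh{t} = \mxf{}^{-1}\vg{t}$ in the variance terms.
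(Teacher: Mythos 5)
Your proposal is correct and follows essentially the same route as the paper's own proof: the same conditional-independence factorization $r_t(\vs{t}) = q_t(\eps_t | \vl{t})\, r_t(\vl{t})$, the same Gaussian-conditioning computation for $q_t(\eps_t | \vl{t})$ (the paper expands the joint $q_t(\vl{t},\eps_t)$, which is algebraically identical to your substitution $\vl{t-1} = \mxf{}^{-1}\vl{t} - \vh{t}\eps_t$), and the same backward step, since what you describe as a pushforward of $r_{t+1}(\vs{t+1})$ through the deterministic inverse transition is exactly the paper's marginalization against the delta distribution $q_{t+1}(\vl{t} | \vl{t+1},\eps_{t+1})$. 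No gap to report.
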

\begin{proof}
	We have 
	\begin{align*}
	P(\vl{t}, \eps_t | {\cal D}) 	= P(\vl{t} | {\cal D}) P(\eps_t | \vl{t}, {\cal D}) 
	= r_t(\vl{t}) q_t(\eps_t | \vl{t})
	\end{align*}
	where we used the conditional independence property
	\[ P(\eps_t | \vl{t}, {\cal D}) =  P(\eps_t | \vl{t}, \tsz{\le t}) \]
	
	Now we derive the distribution for $q_t(\eps_t | \vl{t})$.
	Note that if $q_t(\vl{t-1}) = N(\vmu{t-1},\mxsigma{t-1})$, then $q_t(\vl{t} | \eps_t)$ is a conditional Gaussian with
	\[
	\Ex[\vl{t}|\eps_t] = \mxf{}\vmu{t-1} + \vg{t}\eps_t,\quad \Cov[\vl{t}|\eps_t]
	= \mxf{}\mxsigma{t-1}\mxf{}^T.
	\]
	Then, we have
	\begin{align*}
	q_t(\vl{t}, \eps_t)		&= N(\vl{t} | \mxf{} \vmu{t - 1} + \vg{t} \eps_t, \mxf{} \mxsigma{t - 1} \mxf{}^T )  N(\eps_t | 0, 1)\\
	&\propto \exp\left[ -\frac{1}2\eps_t^2 \left( 1 +
	\vh{t}^T\mxsigma{t-1}^{-1}\vh{t} \right) + \left( \mxf{}^{-1}\vl{t} - \vmu{t-1}
	\right)^T \mxsigma{t-1}^{-1}\vh{t} \eps_t \right],
	\end{align*}
	where we used notation $\vh{t} = \mxf{}^{-1}\vg{t}$.
	From this joint distribution, we see that the conditional distribution $q_t(\eps_t | \vl{t})$ is given by
	\begin{align*}
 	 q_t(\epsilon_t | \vl{t}) &= N(\eps_t | v_{t}^{\epsilon} \left( \mxf{}^{-1}\vl{t} - \vmu{t-1}
				\right)^T \mxsigma{t-1}^{-1}\vh{t}, v_{t}^{\epsilon})\\
     &= N(\eps_{t} | v_{t}^{\eps} \vl{t}^T \mxf{}^{-T} \mxsigma{t-1}^{-1} \vh{t} - v_{t}^\eps \vmu{t-1}^T \mxsigma{t-1}^{-1} \vh{t}, v_{t}^\eps),
	\end{align*}
	where $v_{t}^{\epsilon} = (1 + \vh{t}^T \mxsigma{t-1}^{-1} \vh{t})^{-1}$.
	Let $\vd{t} =  v_{t}^{\eps} \mxf{}^{-T} \mxsigma{t-1}^{-1} \vh{t} $.
	Then $q_{t}(\epsilon_{t} | \vl{t})$ is given by
	\begin{align}\label{eq:error}
	 \epsilon_{t} = \vd{t }^T \vl{t} + \nu_{t}^{\eps},\quad \nu_{t}^{\eps}\sim N(m_{t}^{\eps},
	v_{t}^{\eps}),
	\end{align}
	where $m_{t}^\eps = - v_{t}^\eps \vmu{t-1}^T \mxsigma{t-1}^{-1} \vh{t}$.
	
	Note that $r_{T - 1}(\vl{T - 1}) = q_{T}(\vl{T - 1})$. 
	Next, for the backward recursion, we have for $t = 0, \ldots, T - 2$,
	\begin{align*}
	r_t(\vl{t}) &= \int r_t(\vl{t + 1}, \vl{t}, \eps_{t + 1})\ d\eps_{t+1} d\vl{t+1}, \\
	&= \int r_t(\vl{t + 1}) r_t(\eps_{t + 1} | \vl{t + 1}) r_t(\vl{t} | \vl{t + 1}, \eps_{t + 1}) \ d\eps_{t+1} d\vl{t+1}, \\
	&= \int r_{t}(\vl{t+1}) q_{t+1}(\eps_{t+1} | \vl{t+1}) q_{t+1}(\vl{t} | \vl{t+1}, \eps_{t+1})  \ d\eps_{t+1} d\vl{t+1},
	%	r_{T-1}(\vl{T-1}) &= q_T(\vl{T-1}),
	\end{align*}
	where we used the conditional independence properties 
	\begin{align*}
	r_t(\eps_{t + 1} | \vl{t + 1}) 			&= q_{t+1}(\eps_{t+1} | \vl{t+1})\\
	r_t(\vl{t} | \vl{t + 1}, \eps_{t + 1}) 	&= q_{t+1}(\vl{t} | \vl{t+1}, \eps_{t+1}) 
	\end{align*}
	Note that $q_{t+1}(\vl{t} | \vl{t+1}, \eps_{t+1})$ is a delta distribution given by $\vl{t} = \mxf{}^{-1}( \vl{t+1} - \vg{t+1}\eps_{t+1} )$.
	Then $r_t(\vl{t})$ is given by 
	\begin{align*}
     \vl{t} &= \mxf{}^{-1} \vl{t+1} - \mxf{}^{-1} \vg{t+1} \epsilon_{t+1},\quad \vl{t + 1} \sim r_{t + 1}(\vl{t + 1}), \eps_{t + 1} \sim q_{t + 1}(\eps_{t + 1} | \vl{t + 1})\\
        &\stackrel{(\ref{eq:error})}{=} \mxf{}^{-1} \vl{t+1} - \mxf{}^{-1}\vg{t+1} \vd{t+1}^T \vl{t+1} - \mxf{}^{-1} \vg{t+1}\nu_{t+1}^\eps, \quad \nu_{t+1}^{\eps}\sim N(m_{t+1}^{\eps}, v_{t+1}^{\eps})\\
        &= \mxf{}^{-1} (\Id - \vg{t+1} \vd{t+1}^T ) \vl{t+1} - \vh{t+1}\nu_{t+1}^\eps.
   \end{align*}	
%   where $\vl{t + 1} \sim r_{t + 1}(\vl{t + 1}), \nu_{t+1}^{\eps}\sim N(m_{t+1}^{\eps}, v_{t+1}^{\eps})$.
%	\begin{align*} 
%	r_{t} (\vl{t}) &= N(\vl{t} | \mxf{}^{-1} \hvmu{t + 1}, \mxf{}^{-1} \hmxsigma{t + 1} \mxf{}^{-T} + \vh{t} \vh{t}^T),
%	\end{align*}		
%	where $\vh{t} = \mxf{}^{-1} \vg{t}$.
%	We already shown that 
%	\[  q_t(\epsilon_t | \vl{t}) = N(\eps_t | v_{t}^{\epsilon} \left( \mxf{}^{-1}\vl{t} - \vmu{t-1}
%		\right)^T \mxsigma{t-1}^{-1}\vh{t}, v_{t}^{\epsilon}) \]
%	This implies 
%	\[ q_t(\epsilon_{t+1} | \vl{t+1}) = N(\eps_{t+1} | v_{t+1}^{\eps} \vl{t+1}^T \mxf{}^{-T} \mxsigma{t}^{-1} \vh{t+1} - v_{t+1}^\eps \vmu{t}^T \mxsigma{t}^{-1} \vh{t+1}, v_{t+1}^\eps)  \]
%	
%	Let $\vk{t + 1} =  v_{t+1}^{\eps} \mxf{}^{-T} \mxsigma{t}^{-1} \vh{t+1} $.
%	Then $q_{t+1}(\epsilon_{t+1})$ is given by
%	\[ \epsilon_{t+1} = \vk{t + 1}^T \vl{t+1} + \nu_{t+1}^{\eps},\quad \nu_{t+1}^{\eps}\sim N(m_{t+1}^{\eps},
%	v_{t+1}^{\eps}),\]
%	where $m_{t+1}^\eps = - v_{t+1}^\eps \vmu{t}^T \mxsigma{t}^{-1} \vh{t+1}$.
%	where $\vl{t + 1} \sim r_{t + 1}(\vl{t + 1})$.
	Thus, $r_t(\vl{t})$ is a Gaussian distribution whose mean and covariance are given by 
	\begin{align*} 
 	 \hvmu{t} &= \mxf{}^{-1} (\Id - \vg{t+1} \vd{t+1}^T ) \hvmu{t+1} -  m_{t+1}^\eps \vh{t+1} \\
 	 &= \mxf{}^{-1} \hvmu{t+1} - \vh{t+1} \vd{t+1}^T \hvmu{t+1} -  m_{t+1}^\eps \vh{t+1},\\
% 	  &= \mxf{}^{-1} \hvmu{t+1} - \vh{t+1} \vk{t+1}^T  \hvmu{t+1} + \vh{t+1}v_{t+1}^\eps \vmu{t}^T \mxsigma{t}^{-1} \vh{t+1}  \\
% 	  &= \mxf{}^{-1} \hvmu{t+1} +  \vh{t+1}v_{t+1}^\eps \vh{t+1}^T \mxsigma{t}^{-T}  (\vmu{t} - \mxf{}^{-1}\hvmu{t+1} ).
	\hmxsigma {t} &= \mxf{}^{-1}(\Id - \vg{t+1} \vd{t+1}^T )  \hmxsigma{t+1}  (\Id - \vg{t+1} \vd{t+1}^T )^T \mxf{}^{-T} + \vh{t+1} v_{t+1}^\eps  \vh{t+1}^T 
	\end{align*}
%	The covariance of $r_t(\vl{t})$ is given by 
%	\begin{align*} 
%	\hmxsigma {t} &= \mxf{}^{-1}(\Id - \vg{t+1} \vd{t+1}^T )  \hmxsigma{t+1}  (\Id - \vg{t+1} \vd{t+1}^T )^T \mxf{}^{-T} + \vh{t+1} v_{t+1}^\eps  \vh{t+1}^T \\
%	&= \mxf{}^{-1} \hmxsigma{t+1} \mxf{}^{-T} + \vh{t+1} \vd{t+1}^T \hmxsigma{t+1} \vd{t+1} \vh{t+1}^T
%	  - \vn{t+1} - \vn{t+1}^T + \vh{t+1} v_{t+1}^\eps  \vh{t+1}^T 
%%      &\quad -  \mxf{}^{-1} \hmxsigma{t+1} \vd{t+1}  \vh{t+1}^T + \vh{t+1} v_{t+1}^\eps  \vh{t+1}^T  
%	\end{align*}
%	where $\vn{t+1} = \vh{t+1} \vd{t+1}^T \hmxsigma{t+1} \mxf{}^{-T}$.
\end{proof}

\section{Square Root Information Filter}
\label{app:ISF}

For numerical stability, the forward messages $q_t(\vl{t-1})$ and the RTS backward messages $r_t(\vl{t})$ are computed using {\em square root information filter} \cite[ch.~12]{Hyndman:08}.
The rough idea is to represent a joint Gaussian distribution over $\vl{}$ as
\[
\mxr{}\vl{} = \vc{},\quad \vc{}\sim N(\vm{},(\diag\vv{})),
\]
where $\mxr{}$ is unit upper triangular (diagonal elements are 1). In our case, we would represent $q_t(\vl{t-1})$ by $\{ \mxr{t-1}, \vm{t-1}, \vv{t-1} \}$. This representation can be seen as a stochastic linear equation system, with deterministic matrix and Gaussian random vectors, where the right hand side consists of {\em independent} random variables.

The forward propagation works as follows. Let $\mxq{t}$ be orthogonal such that $\mxq{t}\vg{t} = \|\vg{t}\|\vdelta{1}$. $\mxq{t}$ can be obtained as product of Givens rotations, or a single Householder reflection. We use the equations
\[
\mxr{t-1}\vl{t-1} = \vc{t-1},\quad \vl{t|t-1} = \mxf{}\vl{t-1} + \eps_t\vg{t},
\quad \tsz{t+1|t} = \va{t+1}^T\vl{t|t-1} + \nu_{t+1},
\]
where $\nu_{t+1}\sim N(b_{t+1}, \sigma_{t+1}^2)$. This leads to the stochastic system
\begin{equation}\label{eq:stochsys-1}
\left[ \begin{array}{ccc}
\mxr{t-1} & \mxzero & \vzero \\
-\mxq{t}\mxf{} & \mxq{t} & \vzero \\
\vzero^T & -\va{t+1}^T & 1
\end{array} \right] \left[ \begin{array}{c}
\vl{t-1} \\
\vl{t|t-1} \\
\tsz{t+1|t}
\end{array} \right] = \left[ \begin{array}{c}
\vc{t-1} \\
\eps_t\|\vg{t}\|\vdelta{1} \\
\nu_{t+1}
\end{array} \right].
\end{equation}
Note that the right hand sides of the trailing equations have zero variances. We now use a procedure to be detailed below to transform this system into
\begin{equation}\label{eq:stochsys-2}
\left[ \begin{array}{ccc}
* & * & * \\
\vzero & \mxr{t} & \vr{} \\
\vzero^T & \vzero^T & 1
\end{array} \right] \left[ \begin{array}{c}
\vl{t-1} \\
\vl{t|t-1} \\
\tsz{t+1|t}
\end{array} \right] = \left[ \begin{array}{c}
* \\
\vc{t|t-1} \\
d_{t+1|t}
\end{array} \right].
\end{equation}
Here, $\mxr{t}$ is unit upper triangular, and the right hand sides are independent Gaussians. The entries marked as ``*'' are not important in the sequel and need not be computed. Now, the $d_{t+1|t}$ expression can be used in order to evaluate the (approximate) log likelihood. Moreover, $q_{t+1}(\vl{t})$ is obtained by plugging in $\tsz{t+1}$ for $\tsz{t+1|t}$:
\[
\vc{t} = \vc{t|t-1} - \tsz{t+1}\vr{}.
\]
This completes the description of the forward pass.

Alongside the forward pass, we need to determine $q_t(\eps_t | \vl{t})$, which is given by
\begin{equation}\label{eq:eps-given-l}
\eps_t = \vk{t}^T\vl{t} + \nu_t^{\eps},\quad \nu_t^{\eps}\sim N(m_t^{\eps},
v_t^{\eps}).
\end{equation}
Given that $q_t(\vl{t-1})$ is represented by
\[
\mxr{t-1}\vl{t-1} = \vc{t-1}\sim N(\vm{t-1},\mxv{t-1}),\quad \mxv{t-1} =
\diag\vv{t-1},
\]
we have that
\[
\mxsigma{t-1}^{-1} = \mxr{t-1}^T\mxv{t-1}^{-1}\mxr{t-1},\quad
\mxsigma{t-1}^{-1}\vmu{t-1} = \mxr{t-1}^T\mxv{t-1}^{-1}\vm{t-1}.
\]
Plugging this into the equations above:
\[
v_t^{\eps} = \left( 1 + \tvh{t}{}^T\mxv{t-1}^{-1}\tvh{t}
\right)^{-1},\quad \tvh{t} = \mxr{t-1}\mxf{}^{-1}\vg{t},
\]
and
\[
\vk{t} = v_t^{\eps} \mxf{}^{-T}\mxr{t-1}^T\mxv{t-1}^{-1}\tvh{t},\quad m_t^{\eps} =
-v_t^{\eps} \vm{t-1}^T\mxv{t-1}^{-1}\tvh{t}.
\]
Now the backward pass. Suppose that $r_{t}(\vl{t})$ is represented by $\mxr{t}\vl{t} = \vc{t}$. The relevant equation is
\[
\mxf{}\vl{t-1} = \vl{t} - \vg{t}\eps_{t} = \left( \Id - \vg{t}\vk{t}^T
\right) \vl{t} - \vg{t}\nu_{t}^{\eps}.
\]
Multiplying with $\mxq{t}$, we obtain the stochastic system
\[
\left[ \begin{array}{cc}
\mxr{t} & \mxzero \\
\mxq{t}(\Id - \vg{t}\vk{t}^T) & -\mxq{t}\mxf{}
\end{array} \right] \left[ \begin{array}{c}
\vl{t} \\
\vl{t-1}
\end{array} \right] = \left[ \begin{array}{c}
\vc{t} \\
\nu_{t}^{\eps}\|\vg{t}\|\vdelta{1}
\end{array} \right].
\]
The same procedure as above renders $\mxr{t-1}\vl{t-1} = \vc{t-1}$ as representation of $r_{t-1}(\vl{t-1})$. At this point, we use \eqp{eps-given-l} to obtain
\[
\Ex[\eps_t | \mathcal{D}] = \vk{t}^T\Ex[\vl{t} | \mathcal{D}] + m_t^{\eps} =
\vk{t}^T\mxr{t}^{-1}\vm{t} + m_t^{\eps}.
\]
Moreover, $\Ex[\vl{0} | \mathcal{D}] = \mxr{0}^{-1}\vm{0}$. Finally, we also require
\[
\Ex[y_t | \mathcal{D}] = \tva{t}{}^T\vm{t-1} + b_t,\quad \Var[y_t |
\mathcal{D}] = \tva{t}{}^T(\diag\vv{t-1})\tva{t},\quad \tva{t} = \mxr{t-1}^{-T}
\va{t}.
\]

Finally, the computation of $q_1(\vl{0})$. We assume that $P(\vl{0})$ is Gaussian with a diagonal covariance, given by $\vl{0} = \tvc{0}$, and $\nu_1\sim N(b_1, \sigma_1^2)$. The relevant system is
\[
\left[ \begin{array}{cc}
\Id & \mxzero \\
-\va{1}^T & 1
\end{array} \right] \left[ \begin{array}{c}
\vl{0} \\
\tsz{1|0}
\end{array} \right] = \left[ \begin{array}{c}
\tvc{0} \\
\nu_{1}
\end{array} \right].
\]
This is transformed to
\[
\left[ \begin{array}{cc}
\mxr{0} & \vr{} \\
\vzero^T & 1
\end{array} \right] \left[ \begin{array}{c}
\vl{0} \\
\tsz{1|0}
\end{array} \right] = \left[ \begin{array}{c}
\vc{0|0} \\
d_{1|0}
\end{array} \right].
\]
Then, $q_1(\vl{0})$ is represented by $\mxr{0}$ and $\vc{0} = \vc{0|0} - \tsz{1}\vr{}$.

\subsection*{Triangularization of Stochastic Equations}

Here, we detail an algorithm for transforming a system such as \eqp{stochsys-1} into the form \eqp{stochsys-2}. The material is from \cite[ch.~12, Appendix]{Hyndman:08}, with an additional fix: working with standard deviation instead of variance for numerical stability.

We are given a system of stochastic equations $q_i = (\va{i}, \mu_i, v_i)$, with the semantics
\[
\va{i}^T\vx{} = c_i \sim N(\mu_i, v_i),
\]
where the $c_i$ of different equations are independent. We also write $c_i = \mu_i + u_i$, where $u_i\sim N(0, v_i)$ are independent.

What we do is a clever variant of Gaussian elimination, which also keeps the right hand sides independent. The basic primitive is as follows. The input are stochastic equations $q_1$, $q_i$ with $a_{1 1} = 1$. The first equation is the pivot row. The goal is a transformation to $q_1^+$, $q_i^+$, such that $a_{i 1}^+ = 0$, but $a_{1 1}^+ = 1$. By Gaussian elimination:
\[
q_i^+ = q_i - a_{i 1} q_1,\quad v_i^+ = v_i + a_{i 1}^2 v_1.
\]
This transformation alone leaves $c_1$ and $c_i^+$ dependent, we also need
\[
q_1^+ = w_1 q_1 + w_i q_i,\quad w_1 = \frac{v_i}{v_i^+},\; w_i =
\frac{a_{i 1} v_1}{v_i^+}.
\]
Then, $a_{1 1}^+ = 1$, and
\[
\Ex\left[ u_i^+ u_1^+ \right] = \Ex\left[ (u_i - a_{i 1} u_1) (w_1 u_1 + w_i u_i)
\right] = w_i v_i - a_{i 1} w_1 v_1 = 0.
\]
Importantly, this works also if $v_i = 0$ (we assume that $a_{i 1}\ne 0$, otherwise nothing has to be done). 
Note that if $v_1 = 0$, we use $w_1 = 1$, $w_i = 0$, even if $v_i = 0$. Namely, if both variances are zero, the equations are deterministic, and Gaussian eliminiation alone is sufficient. It is interesting to note that $v_i = 0$ implies $v_1^+ = 0$.

In our implementation, we represent standard deviations $s_i = v_i^{1/2}$ instead of variances, as detailed at the end of this section. This should improve numerical robustness in the presence of small, but nonzero variances.

Suppose we start with the stochastic system
\begin{equation}\label{eq:stochsys-3}
\left[ \begin{array}{cc}
\mxr{} & \mxzero \\
\mxa{} & \mxb{}
\end{array} \right]\quad \left[ \begin{array}{c}
\vc{} \\
\vd{}
\end{array} \right],
\end{equation}
where $\vc{}$, $\vd{}$ consists of independent Gaussians, and $\mxr{}\in\R^{d\times d}$ is unit upper triangular. $\mxa{}$ may be sparse, which we exploit. In a first part, the primitive is applied to obtain
\begin{equation}\label{eq:stochsys-4}
\left[ \begin{array}{cc}
* & * \\
\mxzero{} & \mxb{}'
\end{array} \right]\quad \left[ \begin{array}{c}
* \\
\vd{}'
\end{array} \right],
\end{equation}
In the second part, $[\mxb{}' | \vd{}']$ is transformed to $[\mxr{}' | \vc{}']$. Here, we iterate between row permutations for pivoting, division by the pivot element, and the primitive to zero out the column below. The pivot is selected to maximize the absolute value in the system matrix column, as is normally done in the QR decomposition.

In our implementation, the whole matrix in \eqp{stochsys-3} is passed as dense matrix, and is overwritten by all of \eqp{stochsys-4}, including the ``*'' parts. Also, the complete right hand side is returned.

Finally, here is how to use standard deviations $s_1$, $s_i$ instead of variances $v_1$, $v_i$. If $s_1 = 0$ or $a_{i 1} = 0$, nothing has to be done, in that $w_1 = 1$, $w_i = 0$. Assume that $s_1 > 0$ and that $a_{i 1} \ne 0$. The goal is not to compute variances at all, as this could trigger overflow or underflow. Recall that in exact arithmetic:
\[
w_1 = \frac{s_i^2}{v_i^+},\quad w_i = \frac{(a_{i 1} s_1) s_1}{v_i^+},\quad
v_i^+ = s_i^2 + (a_{i 1} s_1)^2.
\]
We make a case distinction. First, suppose that $s_i > |a_{i 1} s_1|$. Then,
\[
w_1 = \frac{1}{1 + \alpha^2},\quad w_i = \frac{\alpha (s_1/s_i)}{1 + \alpha^2},
\quad \alpha = \frac{a_{i 1} s_1}{s_i}.
\]
We directly compute
\[
s_i^+ = s_i\sqrt{ 1 + \alpha^2 },\quad s_1^+ = w_1 s_1 \sqrt{ 1 + \alpha^2 }
= \frac{s_1}{\sqrt{ 1 + \alpha^2 }}.
\]
This is because
\[
\frac{w_i s_i}{w_1 s_1} = \frac{a_{i 1} s_1^2 s_i}{s_i^2 s_1} = \alpha.
\]
Second, if $s_i \le |a_{i 1} s_1|$. Then,
\[
w_1 = \frac{\beta^2}{1 + \beta^2},\quad w_i = \frac{1/a_{i 1}}{1 + \beta^2},
\quad \beta = \frac{s_i}{a_{i 1} s_1}.
\]
We directly compute:
\[
s_i^+ = |a_{i 1} s_1| \sqrt{ 1 + \beta^2 },\quad s_1^+ = |w_i| s_i \sqrt{ 1 +
	\beta^2 } = \frac{s_i/|a_{i 1}|}{\sqrt{ 1 + \beta^2 }} =
\frac{|\beta| s_1}{\sqrt{ 1 + \beta^2 }}
\]
Note that in both cases, we have:
\[
s_1^+ = \sqrt{w_1} s_1,\quad s_i^+ = s_i / \sqrt{w_1}.
\]

\section{Inference over Weights $\vw{}$}
\label{app:InfOverW}

We can extend this method in order to do inference over the weights $\vw{}$, instead of estimating them as part of $\vth{}$. Namely, we define
\[
q_t(\vl{t},\eps_t,\vl{t-1},\vw{}) = \delta(\vl{t}|\vl{t-1},\eps_t)
N(\eps_t|0,1) q_t(\vl{t-1},\vw{}),
\]
where the latter factor is a joint Gaussian. Moreover, $N(\tilde{z}_t | \vl{t-1})$ is replaced by $N(\tilde{z}_t | \vl{t-1},\vw{})$, since
\[
y_t = \va{t}^T\vl{t-1} + \vx{t}^T\vw{}.
\]

\subsubsection*{Forward Pass}

Some applications of joint inference require a forward pass only. For example, the posterior $P(\vw{} | \mathcal{D})$ can be determined this way. In the forward pass, we have
\begin{equation}\label{eq:joint-repr-qt}
\left[ \begin{array}{cc}
\mxr[l]{t-1} & \mxr[l,w]{t-1} \\
\mxzero & \mxr[w]{t-1}
\end{array} \right] \left[ \begin{array}{c}
\vl{t-1} \\
\vw{}
\end{array} \right] = \left[ \begin{array}{c}
\vc[l]{t-1} \\
\vc[w]{t-1}
\end{array} \right]
\end{equation}
as representation of $q_{t}(\vl{t-1}, \vw{})$. The forward system \eqp{stochsys-1} becomes:
\[
\left[ \begin{array}{cccc}
\mxr[l]{t-1} & \mxzero & \mxr[l,w]{t-1} & \vzero \\
-\mxq{t}\mxf{} & \mxq{t} & \mxzero & \vzero \\
\mxzero & \mxzero & \mxr[w]{t-1} & \vzero \\
\vzero^T & -\va{t+1}^T & -\vx{t+1}^T & 1
\end{array}\right] \left[ \begin{array}{c}
\vl{t-1} \\
\vl{t|t-1} \\
\vw{} \\
\tsz{t+1|t}
\end{array}\right] = \left[ \begin{array}{c}
\vc[l]{t-1} \\
\eps_t\|\vg{t}\|\vdelta{1} \\
\vc[w]{t-1} \\
\nu_{t+1}
\end{array}\right].
\]
Even though this system is quite a bit larger, the new parts are very sparse, a fact which the triangularization algorithm makes use of. After triangularization, we have
\[
\left[ \begin{array}{cccc}
* & * & * & * \\
\mxzero & \mxr[l]{t} & \mxr[l,w]{t} & \vr[l]{} \\
\mxzero & \mxzero & \mxr[w]{t} & \vr[w]{} \\
\vzero^T & \vzero^T & \vzero^T & 1
\end{array}\right] \left[ \begin{array}{c}
\vl{t-1} \\
\vl{t|t-1} \\
\vw{} \\
\tsz{t+1|t}
\end{array}\right] = \left[ \begin{array}{c}
* \\
\vc[l]{t|t-1} \\
\vc[w]{t|t-1} \\
d_{t+1|t}
\end{array}\right],
\]
from which $q_{t+1}(\vl{t}, \vw{})$ can be read off. Finally, the initial system for computing $q_1(\vl{0},\vw{})$ is given by
\[
\left[ \begin{array}{ccc}
\Id & \mxzero & \vzero \\
\mxzero & \Id & \vzero \\
-\va{1}^T & -\vx{1}^T & 1
\end{array}\right] \left[ \begin{array}{c}
\vl{0} \\
\vw{} \\
\tsz{1|0}
\end{array}\right] = \left[ \begin{array}{c}
\tvc[l]{0} \\
\tvc[w]{0} \\
\nu_{1}
\end{array}\right].
\]
Here, we assume that $\vw{}$ has a Gaussian prior with diagonal covariance, giving rise to the stochastic representation $\vw{} = \tvc[w]{0}$ for $P(\vw{})$.

The posterior $P(\vw{} | \mathcal{D})$ is obtained after the forward pass already, by marginalization of $q_T(\vl{T-1}, \vw{}) = P(\vl{T-1}, \vw{} | \mathcal{D})$. In fact, its representation is given by $(\mxr[w]{T-1}, \vc[w]{T-1})$.

\subsubsection*{Running Time}

What is the running time? Suppose that $\vl{t}\in\R^d$, $\vw{}\in\R^w$. Also, suppose that all $T$ positions are observed. We have to triangularize $T$ forward pass systems of size $2 d + w + 1$. This is done using row operations which cost $r(2 d + w)$. How many of them do we need? First, we annihilate $-\mxq{t}\mxf{}$, which costs $\text{nnz}(\mxq{t}\mxf{})\le d^2$ row operations. Only the two upper rows blocks are touched. This replaces $\mxq{t}\to \mxt{1}\in\R^{d\times d}$. Next, we triangularize $\mxt{1}$ and annihilate $-\va{t+1}^T$, which costs $\text{nnz}(\mxt{1}) + \text{nnz}(\va{t+1})\le d^2 + d$. Again, the third row block of size $w$ is not touched. The resulting matrix is
\[
\left[ \begin{array}{cccc}
* & * & * & \vzero \\
\mxzero & \mxr[l]{t} & \mxr[l,w]{t} & \vr[l]{} \\
\mxzero & \mxzero & \mxr[w]{t-1} & \vzero \\
\vzero^T & \vzero^T & \vt{2}^T & *
\end{array}\right].
\]
Now, $\mxr[w]{t-1}$ is upper triangular already, so we just have to annihilate $\vt{2}^T$, at the cost of $\text{nnz}(\vt{2})\le w$ row operations. The total number of row operations is
\[
\text{nnz}(\mxq{t}\mxf{}) + \text{nnz}(\mxt{1}) + \text{nnz}(\va{t+1}) +
\text{nnz}(\vt{2}) + 1.
\]
All in all, the costs are about $2 d^2 + w$ row operations, therefore $O(w^2 + w d^2 + d^3)$. The scaling is cubic in $d$, but only quadratic in $w$.

\subsubsection*{Backward Pass}

Suppose now that we wish to do joint smoothing over latent states and weights $\vw{}$. For a feature vector of small dimensions, this can be a viable alternative to learning weights by maximum likelihood.

First, we note that
\[
r_t(\vl{t},\eps_t,\vw{}) = q_t(\eps_t | \vl{t}, \vw{}) r_t(\vl{t},\vw{}),
\]
the latter a joint Gaussian. First, we need to determine a representation of $q_t(\eps_t | \vl{t}, \vw{})$ which is computed during the forward pass. Recall the representation of $q_t(\vl{t-1},\vw{})$ from \eqp{joint-repr-qt}. In particular, this implies that $q_t(\vl{t-1} | \vw{})$ is represented by the stochastic system
\[
\mxr[l]{t-1}\vl{t-1} = \vc[l]{t-1} - \mxr[l,w]{t-1} \vw{}.
\]
Note that only the mean on the right hand side is modified, the variances are those of $\vc[l]{t-1}$. Next, it is easy to see that $q_t(\vl{t}, \vw{} | \eps_t) = q_t(\vl{t} | \vw{},\eps_t) q_t(\vw{})$, so that
\[
q_t(\vl{t},\eps_t | \vw{}) \propto q_t(\vl{t} | \vw{},\eps_t) N(\eps_t | 0,1).
\]
This means that we can determine $q_t(\eps_t | \vl{t},\vw{})$ without using the marginal $q_t(\vw{})$. We will modify the derivation from above. First, $q_t(\vl{t-1} | \vw{})$ replaces $q_t(\vl{t-1})$. We simplify the notation by writing $\mxr{t-1}$ instead of $\mxr[l]{t-1}$, $\vm{t-1}$ for $\vm[l]{t-1}$, $\vv{t-1}$ for $\vv[l]{t-1}$, and $\mxs{t-1}$ instead of $\mxr[l,w]{t-1}$. Mean (conditional) and covariance of $q_t(\vl{t-1} | \vw{})$ are
\[
\mxsigma{t-1} = \mxr{t-1}^{-1}\mxv{t-1}\mxr{t-1}^{-T},\quad \vmu{t-1} =
\mxr{t-1}^{-1}\left( \vm{t-1} - \mxs{t-1}\vw{} \right).
\]
We can represent $q_t(\eps_t | \vl{t}, \vw{})$ as
\[
\eps_t = \vk{t}^T\vl{t} + \vq{t}^T\vw{} + \nu_t^{\eps},\quad \nu_t^{\eps}\sim
N(m_t^{\eps}, v_t^{\eps}).
\]
If we repeat the derivation above, we end up with
\[
v_t^{\eps} = \left( 1 + \tvh{}{}^T\mxv{t-1}^{-1}\tvh{}
\right)^{-1},\quad \tvh{} = \mxr{t-1}\mxf{}^{-1}\vg{t}
\]
and
\[
\vk{t} = v_t^{\eps} \mxf{}^{-T}\mxr{t-1}^T\mxv{t-1}^{-1}\tvh{},\quad
m_t^{\eps} = -v_t^{\eps} \vm{t-1}^T\mxv{t-1}^{-1}\tvh{},
\]
as well as
\[
\vq{t} = v_t^{\eps} \mxs{t-1}^T\mxv{t-1}^{-1}\tvh{}.
\]
We need to store $\vq{t}$ alongside $\vk{t}$, $m_t^{\eps}$, $v_t^{\eps}$.

Next, we need to propagate $r_t(\vl{t},\vw{})$ in the backward pass. Suppose that $r_t(\vl{t},\vw{})$ is represented as
\[
\left[ \begin{array}{cc}
\mxr[l]{t} & \mxr[l,w]{t} \\
\mxzero & \mxr[w]{t}
\end{array} \right] \left[ \begin{array}{c}
\vl{t} \\
\vw{}
\end{array} \right] = \left[ \begin{array}{c}
\vc[l]{t} \\
\vc[w]{t}
\end{array} \right]
\]
The relevant equation for the backward pass is
\[
\mxf{}\vl{t-1} = \vl{t} - \vg{t}\eps_t = \left( \Id - \vg{t}\vk{t}^T \right)
\vl{t} - \vg{t}\vq{t}^T\vw{} - \vg{t}\nu_t^{\eps}.
\]
Multiplying with $\mxq{t}$ (where $\mxq{t}\vg{t} = \|\vg{t}\|\vdelta{1}$), we obtain the backward pass stochastic system
\[
\left[ \begin{array}{ccc}
\mxr[l]{t} & \mxzero & \mxr[l,w]{t} \\
\mxq{t}(\Id - \vg{t}\vk{t}^T) & -\mxq{t}\mxf{} & -\|\vg{t}\|\vdelta{1}\vq{t}^T \\
\mxzero & \mxzero & \mxr[w]{t}
\end{array}\right] \left[ \begin{array}{c}
\vl{t} \\
\vl{t-1} \\
\vw{} \\
\end{array}\right] = \left[ \begin{array}{c}
\vc[l]{t} \\
\nu_t^{\eps}\|\vg{t}\|\vdelta{1} \\
\vc[w]{t}
\end{array}\right].
\]
After triangularization, we can read off $r_{t-1}(\vl{t-1},\vw{})$. Note that the lowest block of rows of the system is never touched: $r_t(\vw{}) = r_{T-1}(\vw{})$ for all $t$. This makes sense, since the posterior of $\vw{}$ is obtained after the forward pass already. This fact could be used to save some space when returning the posteriors $P(\vl{T-1},\vw{} | \mathcal{D})$ and $P(\vl{0},\vw{} | \mathcal{D})$, since $\mxr[w]{T-1} = \mxr[w]{0}$. We do not currently do this.

At this point, we have
\[
\Ex[\eps_t | \mathcal{D}] = \vk{t}^T\Ex[\vl{t}|\mathcal{D}] +
\vq{t}^T\Ex[\vw{}|\mathcal{D}] + m_t^{\eps}
\]
and
\[
\Ex[y_t | \mathcal{D}] = \tva{t}^T\vm{t-1} + b_t,\quad
\Var[y_t | \mathcal{D}] = \tva{t}^T(\diag\vv{t-1})\tva{t},\quad
\tva{t} = \mxr{t-1}^{-T} [\va{t}^T, \vx{t}^T]^T,
\]
where
\[
\mxr{t} =  \left[ \begin{array}{cc}
\mxr[l]{t} & \mxr[l,w]{t} \\
\mxzero & \mxr[w]{t}
\end{array} \right],\quad \vm{t} =  \left[ \begin{array}{c}
\vm[l]{t} \\
\vm[w]{t}
\end{array} \right], \quad \vv{t} =  \left[ \begin{array}{c}
\vv[l]{t} \\
\vv[w]{t}
\end{array} \right].
\]
Since we need to store the $\vq{t}$ vectors alongside $\vk{t}$, the storage requirements for Kalman smoothing are increased.

\section{Missing Observations}
\label{app:MissingObs}
Suppose now that some observations are missing. This extension is required for a number of reasons. Here, we detail changes to the mode finding process.

First, here is how the Gaussian posterior inference procedure is modified. Suppose that $(\tilde{z}_t, \sigma_t^2)$ are missing for some $t$. Note that only the forward pass depends on observations: computations during the backward pass are unchanged. First, if $t = 1$ (first observation), we simply start with $q_1(\vl{0}) = P(\vl{0})$, and do not add to the log likelihood. Now, suppose that $t>1$. The forward pass system becomes

\[
\left[ \begin{array}{ccc}
\mxr{t-1} & \mxzero \\
-\mxq{t}\mxf{} & \mxq{t}
\end{array} \right] \left[ \begin{array}{c}
\vl{t-1} \\
\vl{t|t-1}
\end{array} \right] = \left[ \begin{array}{c}
\vc{t-1} \\
\eps_t\|\vg{t}\|\vdelta{1}
\end{array} \right].
\]
After triangularization, $\mxr{t}$ and $\vc{t}$ are read off directly. The log likelihood term is not added to. Everything else remains the same.

Note that for simplicity, our Gaussian inference code assumes that $b_t = 0$ for all $t$. The calling code replaces $\tilde{z}_t$ with $\tilde{z}_t - b_t$. Also, $b_t$ is added back to $\Ex[y_t | {\cal D}]$ later on. It is important that this back-addition also happens for missing locations.

Next, here is how to modify the mode finding procedure. First, all computations involving the likelihood potentials are done only on observed positions. It is simplest to use full vectors, with the understanding that unobserved positions are undefined. In particular, $[\tilde{z}_t]$ and $[\sigma_t]$ are of this kind, as well as the targets $[z_t]$ itself. Next, the line search is affected as follows. Denote the index of observed positions by $O$. The LS criterion is $F(\alpha) = F(\vs{} + \alpha\vd{})$, where
\[
F(\vs{}) = -\log P(\vz{O} | \vy{O}) - \log P(\vs{}),\quad \vy{} = \mxm{}\vs{}
+ \vb{}.
\]
So the log likelihood is summed over $O$ only. Moreover, the first part of $F'(0)$ is given by $(\mxm{}\vd{})_O^T(\nabla_{\vy{O}}-\log P(\vz{O} | \vy{O}))$. Everything else stays the same.

\section{Further Details on Optimization}
\label{app:LineSearch}
Here we collect more details for the Newton-Raphson algorithm.
\subsection*{Line Search}

The Newton/Raphson algorithm is not convergent from any starting point without a line search. The problem is that the full Newton step can overshoot. A line search is easy to do. Recall that $\vy{} = \mxm{}\vs{} + \vb{}$, where $\vs{} = [\veps{0}^T, \vl{0}^T]^T$. Suppose we compute a full Newton step $\vs{}\mapsto \vs{}'$. The desired update would be to $\vs{\alpha} = \vs{} + \alpha{}\vd{}$, where $\vd{} = \vs{}' - \vs{}$ and $\alpha\in (0,1]$. Note that
\[
\vy{\alpha} = \mxm{}\vs{\alpha} + \vb{} = \vy{} + \alpha\mxm{}\vd{}.
\]
We precompute $\mxm{}\vd{}$. The subsequent backtracking line search has negligible cost. The line search criterion is
\[
F(\alpha) := F(\vs{} + \alpha\vd{}),\quad \alpha > 0.
\]
We also need the derivative $F'(0)$ at $\alpha=0$. Recall that
\[
F(\vs{}) = -\log P(\vz{}|\vy{}) - \log P(\vs{}),\quad \vy{} = \mxm{}\vs{} +
\vb{},
\]
where $P(\vs{})$ is Gaussian. Therefore:
\[
F'(0) = \vd{}^T(\nabla_{\vs{}} F) = (\mxm{}\vd{})^T(\nabla_{\vy{}}
-\log P(\vz{}|\vy{})) + \vd{}^T(\nabla_{\vs{}} -\log P(\vs{})).
\]
Here, $\nabla_{\vy{}}-\log P(\vz{}|\vy{}))$ is computed alongside the evaluation of $F(\vs{})$. Also, $P(\veps{}) = N(\vzero,\Id)$, so that $\vd{\eps}^T(\nabla_{\veps{}} -\log P(\veps{})) = \vd{\eps}^T\veps{}$. Finally, suppose that $P(\vl{0})$ is represented by $\mxr{0}\vl{0} \sim N(\vm{0},\mxv{0})$. Then,
\[
\vd{l_0}^T(\nabla_{\vl{0}} -\log P(\vl{0})) = \vd{l_0}^T\mxsigma{0}^{-1}\left(
\vl{0} - \vmu{0} \right) = (\mxr{0}\vd{l_0})^T\mxv{0}^{-1}\left( \mxr{0}\vl{0}
- \vm{0} \right).
\]
Importantly, apart from the computation of $\mxm{}\vd{}$, the line search comes essentially for free.

\subsection*{Initialization for Inner Optimization}

When poorly initialized, the Newton algorithm may fail completely. Namely, when the initial $y_t$ are large, the Gaussian fits have infinite variances, and some have infinite means as well.

We employ two mechanisms. First, if only a minority of potentials are ``maxed out'' in this way, it makes sense to just ignore them temporarily. This is best done by allowing for infinite variances in the linear-Gaussian model. To this end, we could implement support for infinite variances in stochastic triangularization, as proposed by Snyder. Given that, whenever $\phi_t''(y_t)$ falls below a threshold, we use $\sigma_t = \infty$ and $\tsz{t} = 0$ (the latter is arbitrary). Another idea is to simply mark such positions $t$ as unobserved, as is already supported in the Kalman smoother code. This mechanism helps to avoid a numerical breakdown of Newton, as long as not many potentials are affected. On the other hand, if many potentials ``max out'', the likelihood may have to be chosen differently.

The second mechanism is a heuristic to select a sensible starting point. The key idea is to start Newton at a point where the inputs $y_t$ to likelihood potentials do not fall into tails, where second derivatives are very flat. The following is just a first idea, which could be improved. Recall that $\vy{} = \mxm{}\vs{} + \vb{}$. First, we select a value $\bar{y}$ such that $P(z | y=\bar{y})$ is equal to the maximum likelihood estimate from the training targets. If the target vector is too short or shows too little variation, we fall back on a default value estimated from all data. Given $\bar{y}$ and $\vb{}$, we have to choose $\vs{}$ such that $\vy{} = \mxm{}\vs{} + \vb{} \stackrel{!}{=} \bar{y}\vone$.

There are some restrictions on the ISSM parameters $\va{t}$, $\vg{t}$. First, we need $|\va{t+1}^T\vg{t}| > \eps$ for $t=\rng{T-1}$. Also, $\va{1}$ must have at least one entry bounded away from zero. As long as the ISSM has a level component, these assumptions always hold. First, we select $\vl{0}$. Let $j = \argmax_j |(\va{1})_j|$. Set $\vl{0} = ((\bar{y} - b_1) / (\va{1})_j) \vdelta{j}$. This ensures that $y_1 = \bar{y}$. Next, for $t = 1, 2, \dots$:
\[
\bar{y} = \va{t+1}^T\left( \mxf{}\vl{t-1} + \eps_t\vg{t} \right) + b_{t+1}\quad
\Rightarrow\quad \eps_t =
\frac{\bar{y} - b_{t+1} - \va{t+1}^T\mxf{}\vl{t-1}}{\va{t+1}^T\vg{t}}.
\]

\section{Outer Criterion and Gradient}
\label{app:Gradient}

Denote the posterior mode found in the first phase by $\hvs{} = [\hveps{}{}^T, \hvl{0}{}^T]^T$. The surrogate criterion for the negative log marginal likelihood is $\psi(\vth{}, \hvs{})$. Here, we show how to compute its value and gradient. In the sequel, we drop the conditioning on $\vth{}$ of all distributions from the notation. Recall that the true log likelihood is
\begin{equation}\label{eq:true-loglh}
\log P(\vz{}) = \log \int P(\vz{}|\vy{}) P(\vy{}|\veps{},\vl{0}) P(\veps{})
P(\vl{0})\, d\vy{} d\veps{} d\vl{0}.
\end{equation}
Here, $P(\vy{}|\vs{})$ is a delta distribution. Importantly, except
for the first non-Gaussian factor, this is exactly the linear-Gaussian
model for which posterior inference was developed above. Denote
$\hvy{} = \vy{}(\hvs{})$. Recall that $\phi_t(y_t) = -\log P(z_t |
y_t)$, and denote $\tsphi{t}(y_t) := -\log N(\tsz{t} | y_t,
\sigma_t^2)$, where $\tsz{t}$ and $\sigma_t^2$ are fixed functions of
$\hvy{}$, namely
\[
\sigma_t^2 = \frac{1}{\phi_t(\hsy{t})''},\quad \tsz{t} = \hsy{t} -
\sigma_t^2 \phi_t(\hsy{t})'.
\]
We have to be explicit now about the constants:
\[
\phi_t(y_t) \approx \tsphi{t}(y_t) + \gamma_t,\quad \gamma_t :=
\phi_t(\hsy{t}) - \tsphi{t}(\hsy{t}).
\]
The central observation is this. The Laplace approximation criterion is obtained by plugging $e^{-\tsphi{t}(y_t) - \gamma_t}$ in place of $e^{-\phi_t(y_t)}$ into \eqp{true-loglh}:
\[
\psi = -\log\int \left( \prod_t e^{-\gamma_t} N(\tsz{t} | y_t, \sigma_t^2)
\right) P(\vy{}|\veps{},\vl{0}) P(\veps{})
P(\vl{0})\, d\vy{} d\veps{} d\vl{0}.
\]
Therefore, if $P(\tvz{})$ denotes the (marginal) likelihood of the linear-Gaussian model obtained for fixed $\tsz{t}$, $\sigma_t^2$, we have that
\[
\psi = \sum_t\gamma_t - \log P(\tvz{}).
\]
This means that a single forward pass of the information filter detailed above is sufficient for computing $\psi$. More precisely:
\[
\log P(\tvz{}) = \sum_t \log N(\tsz{t} | m_{t|t-1}, v_{t|t-1}),
\]
where $m_{t+1|t}, v_{t+1|t}$ are mean and variance of the $d_{t+1|t}$ right hand side in \eqp{stochsys-2}.

The gradient computation is challenging. $\psi$ depends on $\vth{}$ through various channels. Denote $\mxxi{} = \{\hsy{t}, \tsz{t}, \sigma_t^2\}$. We have that
\[
\psi = \psi(\vth{},\mxxi{}),\quad \mxxi{} = \mxxi{}(\vth{},\hvs{}),\quad
\hvs{} = [\hveps{}{}^T, \hvl{0}{}^T]^T = \hvs{}(\vth{}).
\]
Namely, (a) directly for fixed $\mxxi{}$, (b) through $\mxxi{}$ for fixed $\hvs{}$, and (c) through $\hvs{} = \hvs{}(\vth{})$. It is important to note that for {\em other} commonly used variational inference techniques, such as expectation propagation, the equivalent approximation to $-\log P(\vz{})$ is such that both (b) and (c) can be ignored. This is not true for the Laplace approximation.

We will deal with each channel separately. Below, we will also further split $\mxxi{}$ into $\hvy{}$ and the rest. Then, we have
\[
\nabla_{\vth{}}\psi = \partial_{\vth{}}\psi + \partial_{\hvs{}}\psi
\frac{\partial\hvs{}}{\partial\vth{}}, \quad
\partial_{\vth{}}\psi(\vth{},\hvs{}) = \partial_{\vth{}}\psi(\vth{},\hvy{}) +
\sum_t \partial_{\hsy{t}}\psi \cdot \partial_{\vth{}}\hsy{t}.
\]

\subsubsection*{Gradient: Channel (a)}

Recall that $\mxxi{} = \{\hsy{t}, \tsz{t}, \sigma_t^2\}$ is considered fixed. In this case, $\psi(\vth{}) \doteq -\log P(\tvz{})$ (except for likelihood parameters: see below), where the Gaussian likelihood potentials are fixed. We are now dealing with the exact log partition function of a linear-Gaussian model, and can use the well-known relationship
\[
\nabla_{\vth{}}\psi = \Ex\left[ \vkappa{}(\veps{}, \vl{0}) \right],
\]
where $\Ex[\cdot]$ is over the approximate Gaussian posterior obtained as second order approximation at the mode. In particular, there is no dependence on the non-Gaussian likelihood potentials. In the sequel, $\Ex[\cdot]$ denotes expectation over this Gaussian posterior, unless otherwise said.

The parameters $\vth{}$ appear in several places. First, the weights $\vw{}$ come in via $b_t = \vx{t}^T\vw{}$ and $y_t = \tilde{y}_t + b_t$. Second, $\vth{\text{ES}}$ is concentrated in $\vg{t}$. Third, the prior $P(\vl{0})$ may depend on $\vth{}$.

We currently lack a computationally efficient way to compute the contribution w.r.t.\ $\vth{\text{ES}}$. The solution for now is to estimate this contribution by finite differences. This is not so bad, because $\vth{\text{ES}}$ is small, and the finite difference evaluations are done for fixed $\hvy{}$.

%{\bf TODO}: The lack of exact gradient code w.r.t.\ $\vth{\text{ES}}$ could really hit us in the future. For example, we may want to use ISSM parameters which change over time. Maybe, the missing gradient code can be obtained by automated differentiation?

For the derivative w.r.t.\ $b_t$, recall that we can consider the likelihood potential $N(\tsz{t} | y_t, \sigma_t^2)$ fixed. Now,
\[
\partial_{b_t} -\log N(\tsz{t} | y_t, \sigma_t^2) = \sigma_t^{-2}(y_t - \tsz{t}),
\]
so that
\[
\partial_{b_t}\psi = \sigma_t^{-2}\left( \Ex[y_t] - \tsz{t} \right).
\]

For the $P(\vl{0})$ parameters, we have that
\[
d\psi = \Ex\left[ d -\log P(\vl{0}) \right].
\]
Anything further depends on the parameterization of $P(\vl{0})$. For example, assume that $P(\vl{0}) = N(\vmu{0},\mxv{0})$. Then:
\[
d -\log P(\vl{0}) = -\ve{0}^T(d\vmu{0}) + \frac{1}2\trace\left( \mxv{0}^{-1}
- \ve{0}\ve{0}^T \right) d(\mxv{0}),\quad \ve{0} := \mxv{0}^{-1}(\vl{0} -
\vmu{0}),
\]
therefore
\[
\partial_{\vmu{0}}\psi = \mxv{0}^{-1}\left( \vmu{0} - \Ex[\vl{0}]
\right) =: -\Ex[\ve{0}].
\]
Suppose that $\mxv{0}=\diag\vv{0}$ is diagonal. Then,
\[
\partial_{\vv{0}}\psi = \frac{1}2\left( \vv{0}^{-1} - \Ex[\ve{0}]^2 -
\vv{0}^{-2}\circ\diag^{-1}(\Cov[\vl{0}]) \right).
\]

Finally, $\vth{}$ may contain parameters of the likelihood potentials $\phi_t(\cdot)$. Since $\hvy{}$ and $\mxxi{}$ are fixed, we have that $\psi \doteq \sum_t \phi_t(\hsy{t})$ for these parameters, therefore
\[
\partial_{\vth{\text{LH}}}\psi = \sum_t \partial_{\vth{\text{LH}}}\phi_t(\hsy{t}).
\]

\subsubsection*{Gradient: Channel (b)}

Here, we consider the channel through $\mxxi{} = \{\hsy{t}, \tsz{t}, \sigma_t^2\}$ for fixed mode $\hvs{}$. We decompose $\mxxi{}$ into $\hvy{}$ and the rest. There are two parts to the (b) gradient:
\begin{itemize}
	\item
	Channel through $\{\tsz{t}, \sigma_t^2\}$ for fixed $\hvy{}$
	\item
	Channel through $\hvy{}$
\end{itemize}

To deal with the first part, consider $\hvy{}$ fixed. This contribution arises only if $\vth{\text{LH}}$ is not empty. Let $\theta$ be a component of $\vth{\text{LH}}$. The expressions have the form $\sum_t$. We focus on a summand for fixed $t$, dropping it from the notation for now. We need $\partial_{\theta}\tsz{}$, $\partial_{\theta}\sigma^2$. There are two parts. First, $\gamma\doteq \log N(\tsz{} | \hsy{},\sigma^2)$, and
\[
\partial_{\tsz{}}\gamma = -\xi(\hsy{}),\quad \partial_{\sigma^2}\gamma =
-\frac{1}2\left( \sigma^{-2} - \xi(\hsy{})^2 \right),\quad
\xi(y) := \sigma^{-2} (\tsz{} - y).
\]
And
\[
\partial_{\theta}\gamma = \partial_{\tsz{}}\gamma\cdot \partial_{\theta}\tsz{}
+ \partial_{\sigma^2}\gamma\cdot \partial_{\theta}\sigma^2.
\]
Next, $-\log P(\tvz{})$. We need $\partial_{\tsz{}}\tsphi{}(y)$ and $\partial_{\sigma^2}\tsphi{}(y)$, then
\[
\partial_{\theta}(-\log P(\tvz{})) = \Ex\left[ \partial_{\tsz{}}\tsphi{}(y)
\right] \partial_{\theta}\tsz{} + \Ex\left[ \partial_{\sigma^2}\tsphi{}(y)
\right] \partial_{\theta}\sigma^2,
\]
We can use that $\gamma = -\tsphi{}(\hsy{})$.
\[
\begin{split}
\Ex\left[ \partial_{\tsz{}}\tsphi{}(y) \right] & = \Ex[\xi(y)] = \sigma^{-2}
\left( \tsz{} - \Ex[y] \right), \\
\Ex\left[ \partial_{\sigma^2}\tsphi{}(y) \right] & = \frac{1}2\left(
\sigma^{-2} - \Ex[\xi(y)]^2 - \Var[\xi(y)] \right) = \frac{1}2 \sigma^{-2}
\left( 1 - \sigma^{-2}\left( (\tsz{} - \Ex[y])^2 + \Var[y] \right) \right).
\end{split}
\]

We can compute two vectors $\ve[\tsz{}]{}$, $\ve[\sigma^2]{}$ in $\R^T$. Then, the gradient contributions are
\[
\partial_{\theta}\psi = (\ve[\tsz{}]{})^T [\sigma^{-2}\partial_{\theta}\tsz{t}] +
(\ve[\sigma^2]{})^T [\sigma^{-4}\partial_{\theta}\sigma^2].
\]
Here,
\[
e^{(\tsz{})} = \sigma^2\left( \Ex[\xi(y)] - \xi(\hsy{}) \right) = \hsy{} -
\Ex[y],\quad
e^{(\sigma^2)} = \frac{1}2\left( (\tsz{} - \hsy{})^2 -
(\tsz{} - \Ex[y])^2 - \Var[y] \right).
\]
Also,
\[
\begin{split}
\sigma^{-4}\partial_{\theta}\sigma^2 = -\partial_{\theta}\phi''(\hsy{}),\quad
\sigma^{-2}\partial_{\theta}\tsz{t} & = -\partial_{\theta}\phi'(\hsy{}) -
\phi'(\hsy{}) \sigma^2 (\sigma^{-4}\partial_{\theta}\sigma^2) =
-\partial_{\theta}\phi'(\hsy{}) + \phi'(\hsy{}) \sigma^2
\partial_{\theta}\phi''(\hsy{}) \\
& = -\partial_{\theta}\phi'(\hsy{}) -
(\tsz{} - \hsy{}) \partial_{\theta}\phi''(\hsy{}).
\end{split}
\]
Therefore, the contribution to $\partial_{\theta}\psi$ is
\[
-e^{(\tsz{})} \partial_{\theta}\phi'(\hsy{}) -
\left( e^{(\sigma^2)} + (\tsz{} - \hsy{}) e^{(\tsz{})} \right)
\partial_{\theta}\phi''(\hsy{}).
\]
We precompute two vectors $\ve[1]{}$, $\ve[2]{}$, then:
\[
\partial_{\theta}\psi = (\ve[1]{})^T [\partial_{\theta}\phi_t'(\hsy{t})] +
(\ve[2]{})^T [\partial_{\theta}\phi_t''(\hsy{t})].
\]
Some algebra gives
\[
e^{(1)} = \Ex[y] - \hsy{},\quad e^{(2)} = \frac{1}2\left( (\Ex[y] - \hsy{})^2
+ \Var[y] \right).
\]

Next, we consider the channel through $\hvy{}$ for fixed $\hvs{}$, which is given by
\[
\sum_t \partial_{\hsy{t}}\psi \cdot \partial_{\vth{}}\hsy{t}
\]

{\bf Computation of $\partial_{\hsy{t}}\psi$}. Our derivation does not assume that $\hvs{}$ is the posterior mode for the current $\vth{}$. In fact, $\hvs{}$ can be anything. Fix some $t$ and drop it from notation. Our goal is $\partial_{\hsy{}}\psi$. Denote $e_k := \phi^{(k)}(\hsy{})$. Recall that $\psi = \gamma - \log P(\tvz{})$. First,
\[
\tsphi{}(y) \doteq \frac{1}2 e_2 (\tsz{} - y)^2 - \frac{1}2 \log e_2,\quad
\tsz{} = \hsy{} - \frac{e_1}{e_2}
\]
and
\[
\partial_{\hsy{}}\tsphi{}(y) = \frac{e_3}2 (\tsz{} - y)^2 + e_2
(\partial_{\hsy{}}\tsz{}) (\tsz{} - y) - \frac{e_3}{2 e_2} =: q(y),\quad
e_2 \partial_{\hsy{}}\tsz{} = \frac{e_1 e_3}{e_2}.
\]
In the sequel, $\Ex[\cdot]$ is short for $\Ex[\cdot | \mathcal{D}]$. Now:
\[
\partial_{\hsy{}}-\log P(\tvz{}) = \Ex\left[ \partial_{\hsy{}}\tsphi{}(y) \right]
= \Ex[q(y)] = q(\Ex[y]) + \frac{e_3}2 \Var[y].
\]
Also,
\[
\partial_{\hsy{}}\gamma = \partial_{\hsy{}}\left( \phi(\hsy{}) - \tsphi{}(\hsy{})
\right) = -[\partial_{\hsy{}}\tsphi{}(y)]_{y=\hsy{}} = -q(\hsy{}),
\]
so that
\[
\partial_{\hsy{}}\psi = q(\Ex[y]) - q(\hsy{}) + \frac{e_3}2 \Var[y].
\]
Some more algebra on the difference results in
\[
\partial_{\hsy{}}\psi = \frac{e_3}2\left( (\hsy{} - \Ex[y])^2 + \Var[y] \right).
\]

What happens if $\hsy{}$ is the posterior mode for $\vth{}$? Then, $\Ex[y] = \hsy{}$ (for all $t$), therefore $\partial_{\hsy{}}\psi = \frac{e_3}2\Var[y]$. This expression does not vanish. This is the reason why the gradient computation for Laplace is hard to do. Note that the third derivative $\phi_t'''(\hsy{t})$ is required here.

{\bf Computation of $\partial_{\vth{}}\hsy{t}$}. Recall that
\[
\hsy{t} = \va{t}^T\hvl{t-1} + b_t,\quad \hvl{t} = \mxf{}\hvl{t-1} + \vg{t}
\hseps{t},
\]
where $b_t = \vx{t}^T\vw{}$. Therefore,
\[
\partial_{\vw{}}\hsy{t} = \vx{t}.
\]
First, assume that $\vth{\text{ES}}$ is concentrated in $\vg{t}$. Let $\theta$ be a component in $\vth{\text{ES}}$. Then,
\[
\partial_{\theta}\hvl{t} = \mxf{} \partial_{\theta}\hvl{t-1} + \hseps{t}
\partial_{\theta}\vg{t},\quad \partial_{\theta}\hvl{0} = \vzero,\quad
\partial_{\theta}\hsy{t} = \va{t}^T \partial_{\theta}\hvl{t-1}.
\]
This is done with a simple forward iteration, using $\partial_{\theta}\vg{t}$ in place of $\vg{t}$. Note that if $\vth{}$ contains parameters of $P(\vl{0})$, these do not play a role here, since $\hvy{}$ does not depend on the prior (for fixed $\hvs{}$). The only service we need here is
\[
\vth{}\mapsto \partial_{\theta}\vg{t},\quad \theta\in \vth{\text{ES}}.
\]

More generally, $\vth{\text{ES}}$ may also parameterize $\va{t}$ and $\mxf{}$. For example, this happens if we use damping and wish to learn the damping parameter by maximum likelihood. Suppose that $\theta$ is a component of $\vth{\text{ES}}$ with potential dependence on all ISSM parameters. Then,
\[
\partial_{\theta}\hvl{t} = \mxf{}\partial_{\theta}\hvl{t-1} +
(\partial_{\theta}\mxf{}) \hvl{t-1} + \hseps{t}
\partial_{\theta}\vg{t},\quad \partial_{\theta}\hsy{t} =
\va{t}^T\partial_{\theta}\hvl{t-1} + (\partial_{\theta}\va{t})^T\hvl{t-1}.
\]
%A simple way to do this computation is to define a ``pseudo'' ISSM with the latent state $[\vl{t}^T, \partial_{\theta}\vl{t}^T]^T$ and do a forward pass. The drawback of this approach is that the $\hvl{t}$ sequence is recomputed for each $\theta$. If this a problem, more specific code could be written.

\subsubsection*{Gradient: Channel (c)}

If $\theta$ is a single component of $\vth{}$, the remaining part is given by
\[
\partial_{\hvs{}}\psi \cdot \partial_{\theta}\hvs{}.
\]
Recall that
\[
F(\vs{},\vth{}) = -\log P(\vz{},\vs{} | \vth{}),
\]
and that $\hvs{}=\hvs{}(\vth{})$ is the minimizer of $F$ for fixed $\vth{}$. Therefore, $\partial_{\hvs{}} F = \vzero$ for all $\vth{}$ and
\[
0 = \nabla_{\theta}\partial_{\hvs{}} F = \partial_{\hvs{},\theta} F + (\partial_{\hvs{},\hvs{}} F) \frac{\partial\hvs{}}{\partial\theta}\quad \Rightarrow\quad
\frac{\partial\hvs{}}{\partial\theta} = -\left( \partial_{\hvs{},\hvs{}} F
\right)^{-1} \partial_{\hvs{},\theta} F.
\]
It is of course out of the question to compute or solve a system with $\partial_{\hvs{},\hvs{}} F$. The key observation is that the expression for $\partial\hvs{}/\partial\theta$ constitutes a posterior mean computation in a Gaussian sequential model of the form discussed above. In fact, exactly the same trick allows us to compute Newton update steps in this way. Still, we cannot afford a posterior computation for every $\theta$. We really need to compute
\[
-(\partial_{\hvs{}}\psi)^T \left( \partial_{\hvs{},\hvs{}} F \right)^{-1}
\partial_{\hvs{},\theta} F.
\]
Recall that $\psi = \psi(\vth{},\hvy{})$, where $\hvy{} = \mxm{}\hvs{} + \vb{}$, so that $\partial_{\hvs{}}\psi = \mxm{}^T\partial_{\hvy{}}\psi$. The {\em single} posterior mean computation we need is
\[
\vxi{} := \left( \partial_{\hvs{},\hvs{}} F \right)^{-1} \mxm{}^T\partial_{\hvy{}}\psi,
\]
with which
\[
\partial_{\hvs{}}\psi \frac{\partial\hvs{}}{\partial\theta} = -\vxi{}^T
\partial_{\hvs{},\theta} F.
\]
More specifically, $\vxi{}$ is the mean of the Gaussian
\[
Q(\vs{})\propto \exp\left( -\frac{1}2 \vs{}^T\left( \mxm{}^T\left( \diag
[\phi_t''(\hsy{t})] \right)\mxm{} + \partial_{\hvs{},\hvs{}}(-\log P(\hvs{}))
\right) \vs{} + \vs{}^T\mxm{}^T \partial_{\hvy{}}\psi \right).
\]
We can write $Q(\vs{})\propto N(\tvz{} | \vy{}) \tilde{P}(\vs{})$, where $\vy{} = \mxm{}\vs{}$ (no $\vb{}$ term). Also, $\tilde{P}(\vs{})$ has the same covariance as $P(\vs{})$, but has zero mean (in particular, $\tilde{P}(\vl{0})\ne P(\vl{0})$ in general), and $N(\tvz{} | \vy{})$ is given by
\[
\sigma_t^2 = \frac{1}{\phi_t''(\hsy{t})},\quad \tsz{t} = \sigma_t^2
\partial_{\hsy{t}}\psi.
\]
Now, $\vxi{}$ is computed by a single posterior inference call. Next,
\[
\partial_{\hvs{}} F = \mxm{}^T\left[ \phi_t'(\hsy{t}) \right] + \left[
\begin{array}{c}
\hveps{} \\
\partial_{\hvl{0}} -\log P(\hvl{0})
\end{array} \right],
\]
where $\hvy{} = \mxm{}\hvs{} + \vb{}$. For the second part, if $\theta_l$ is a parameter of the prior $P(\vl{0})$, we need a service for computing $\partial_{\theta}\partial_{\vl{0}} -\log P(\vl{0})$ for given $\vl{0}$. Finally,
\[
\partial_{\theta} \mxm{}^T\left[ \phi_t'(\hsy{t}) \right] =
(\partial_{\theta} \mxm{})^T\left[ \phi_t'(\hsy{t}) \right] + \mxm{}^T
\left[ \phi_t''(\hsy{t}) \partial_{\theta}\hsy{t} \right].
\]
The computation of $\partial_{\vth{}}\hsy{t}$ is derived above, it makes use of the forward iteration. We really only need
\[
\vxi{}^T\left( \partial_{\theta} \mxm{}^T\left[ \phi_t'(\hsy{t}) \right]
\right) = \left( \partial_{\theta} \mxm{}\vxi{} \right)^T \left[
\phi_t'(\hsy{t}) \right] + (\mxm{}\vxi{})^T \left[ \phi_t''(\hsy{t})
\partial_{\theta}\hsy{t} \right].
\]
Here, $\mxm{}\vxi{}$ is computed using a forward pass (with $\vb{}=\vzero$). Moreover, $\partial_{\theta} \mxm{}\vxi{}$ is computed using the $\partial_{\vth{}}\hsy{t}$ code (only $\vth{\text{ES}}$ are relevant, since $\vb{}=\vzero$), feeding in $\vxi{}$ instead of $\hvs{}$.

The computation is folded together with the (b) part. We first compute $[\partial_{\hsy{t}}\psi]$, then $\vxi{}$ by posterior inference, and $\mxm{}\vxi{}$ by a forward pass. The contribution to $\nabla_{\vw{}}\psi$ becomes
\[
\mxx{}\left[\partial_{\hsy{t}}\psi - \phi_t''(\hsy{t}) (\mxm{}\vxi{})_t
\right].
\]
For the part from $\vth{\text{ES}}$, we loop over the components $\theta$. We compute $\partial_{\theta}\hvy{}$, using a modified forward pass with $\partial_{\theta}\vg{t}$ in place of $\vg{t}$, and $\hveps{}$. Then, we compute $\partial_{\theta}\mxm{}\vxi{}$, using the same modified forward pass with $\partial_{\theta}\vg{t}$, but $\vxi{\eps}$ in place of $\hveps{}$. The contribution to $\nabla_{\theta}\psi$ is
\[
\sum_t\left( \partial_{\theta}\hsy{t}\left( \partial_{\hsy{t}}\psi -
\phi_t''(\hsy{t}) (\mxm{}\vxi{})_t \right) - \phi_t'(\hsy{t})
(\partial_{\theta}\mxm{}\vxi{})_t \right).
\]
Finally, if $\theta$ is a parameter of the prior $P(\vl{0})$, the contribution to the gradient is
\[
-\vxi{l_0}^T\left( \partial_{\theta}\partial_{\hvl{0}} -\log P(\hvl{0}) \right).
\]

We also allow the likelihood potentials $\phi_t(\cdot)$ to be parameterized. Suppose that $\theta$ is such a parameter. The gradient contribution is
\[
-(\mxm{}\vxi{})^T\left[ \partial_{\theta} \phi_t'(\hsy{t}) \right].
\]

\subsubsection*{Third Derivatives}

The logistic function is $\lambda(y) = \log(1 + e^y)$, its derivative is $\sigma(y) = (1 + e^{-y})^{-1}$. Also, $\sigma(y)' = \sigma(y) \sigma(-y)$. For the logistic binary classification likelihood,
\[
\phi''(y) = \sigma(y) \sigma(-y),\quad \phi^{(3)} = \phi''(y)\left( 1 - 2
\sigma(y) \right).
\]

The Poisson likelihood with logistic rate function $\lambda(y) = \log(1 + e^y)$ has
\[
\phi(y) = \lambda - z\log\lambda + \log(z!),\quad \phi'(y) = \sigma - z\alpha,
\quad \alpha := \frac{\sigma}{\lambda}.
\]
Also,
\[
\phi''(y) = \phi'(y) (1-\sigma) + z\alpha^2,
\]
since $\alpha(y)' = \alpha(1 - \sigma - \alpha)$. Note that for large negative $y$, we have that $\alpha\approx 1-\sigma \approx 1$. Now,
\[
\phi^{(3)}(y) = (1-\sigma)\left( \phi'' - \sigma\phi' \right) + 2 z\alpha^2
( 1 - \sigma - \alpha),\quad \phi'' - \sigma\phi' = \phi'(1 - 2\sigma)
+ z\alpha^2.
\]

Finally, for the Poisson likelihood with exponential rate function $\lambda(y) = e^y$, we have $\phi^{(3)}(y) = \phi''(y) = e^y$.

\subsubsection*{Missing Observations}

Suppose now that only positions $t\in O$ are observed, others are missing. In this case, the criterion becomes
\[
\psi = \log\int P(\vz{O} | \vy{O}) P(\vy{O} | \vs{}) P(\vs{})\, d\vy{O}
d\vs{} = \sum_{t\in O}\gamma_t - \log P(\tvz{O}).
\]
The relevant variables are $\vth{}$, $\hvs{}$, $\hvy{O}$. In particular, $\hvy{}$ is replaced by $\hvy{O}$ here.

Note that both $F(\vs{})$ and $\psi(\vth{})$ are independent of $b_t$, $t\not\in O$, which means that $\nabla_{b_t}\psi = 0$. This makes sense: the linear function cannot get meaningful input for days $t$ where no observation is made. On the other hand, this implies that if a feature is active only at $t\not\in O$, its weight is not learned. Again, this makes sense. One cannot at the same time ignore observations and try to learn from them.

The criterion computation is easy to adjust. Namely, $\log P(\tvz{O})$ is obtained from the Gaussian filtering code if unobserved positions are taken into account. The gradient computations are adapted by simply replacing all sums over $t$ by sums over $t\in O$. In some cases, it is still simpler to compute a vector for all $t$, but then only use its $t\in O$ entries. Also note that $\partial_{b_t}\psi$ only gets contributions if $t\in O$ (see above).

For channel (b), we use $\sum_{t\in O}\partial_{\hsy{t}}\psi\cdot \partial_{\vth{}} \hsy{t}$. Here, we compute $\partial_{\vth{}}\hsy{t}$ for all $t$, but only use its $O$ entries.

For channel (c), define $\mxm{O} := \mxm{O,\cdot} =
\Id_{O,\cdot}\mxm{}$, which means a forward pass, followed by a
selection of $O$ components. In the Gaussian model for $\vxi{}$,
replace $\mxm{}$ by $\mxm{O}$. In particular, we only use
$\partial_{\hsy{t}}\psi$ for $t\in O$. Also, we use $\mxm{O}\vxi{}$
and $\partial_{\theta}\mxm{O}\vxi{}$. In either case, we first compute
these expressions for $\mxm{}$, by employing {\tt forwardPass}, then
select the $O$ components.

\subsubsection*{Gaussian Likelihood}

We may want to run the maximum likelihood learning code for a model with Gaussian likelihood, say $N(\tsz{t} | y_t,\sigma_t^2)$. In the most general case, both $\tsz{t}$ and $\sigma_t^2$ may depend on the parameter $\vth{}$. The criterion is
\[
\psi(\vth{}) = -\log\int \prod_t N(\tsz{t} | y_t,\sigma_t^2) P(\vy{}|\vs{})
P(\vs{})\, d\vy{} d\vs{}.
\]
No approximation is needed, the criterion can be computed by a single Kalman filtering call. For the gradient, we use that
\[
\partial_{p_t}\psi = \Ex\left[ \partial_{p_t}\tsphi{t}(y_t) \right],\quad
p_t\in\{ \tsz{t}, \sigma_t^2 \}.
\]
Fix $t$ and drop it from the notation. Some algebra gives
\[
\sigma^2\partial_{\tsz{}}\psi = \tsz{} - \Ex[y],\quad
\sigma^2\partial_{\sigma^2}\psi = \frac{1}2\left( 1 - \sigma^{-2}\Ex\left[
(\tsz{} - y)^2 \right] \right) = \frac{1}2\left( 1 - \sigma^{-2}\left(
(\tsz{} - \Ex[y])^2 + \Var[y] \right) \right).
\]
We compute two vectors $\ve[1]{}$, $\ve[2]{}$. Then, if $\theta$ is a
parameter of the Gaussian likelihood component:
\[
\partial_{\theta}\psi = \ve[1]{}{}^T\left[
\sigma_t^{-2}\partial_{\theta}\tsz{t} \right] + \ve[2]{}{}^T\left[
\sigma_t^{-2}\partial_{\theta}\sigma_t^2 \right].
\]

\bibliographystyle{plainnat}
%\bibliography{deepar-arxiv.bib}
\bibliography{latstate.bib}

%\end{thebibliography}
\end{document}